\documentclass[10pt,twocolumn,letterpaper]{article}

\usepackage{cvpr}
\usepackage{times}
\usepackage{epsfig}
\usepackage{graphicx}
\usepackage{amsmath}
\usepackage{amssymb}
\usepackage{amsthm}
\usepackage{subcaption}
\usepackage{nicefrac}      
\newtheorem{theorem}{Theorem}
\usepackage[noend]{algpseudocode}
\usepackage[ruled,vlined]{algorithm2e}
\usepackage{booktabs}      
\usepackage{multirow}
\usepackage[export]{adjustbox}

\newcommand{\real}{\mathbb{R}}

\newcommand{\sj}[2]{{#2}}



\usepackage[pagebackref=true,breaklinks=true,letterpaper=true,colorlinks,bookmarks=false]{hyperref}

\cvprfinalcopy 


\ifcvprfinal\pagestyle{empty}\fi
\begin{document}

\title{Generative Adversarial Training by Blurring the Data Distribution Support}
\title{Generative Adversarial Training via Multi-Smoothness Support Matching}
\title{A Principled Generative Adversarial Training via Multiple Data Distribution Filterings}
\title{A Principled Generative Adversarial Training via Data Distribution Filterings}
\title{Stable Generative Adversarial Training via Data Distribution Filtering}
\title{A Stable Generative Adversarial Training via Data Distribution Filtering}
\title{On Stabilizing Generative Adversarial Training with Noise}



\author{Simon Jenni \qquad Paolo Favaro\\
University of Bern\\
{\tt\small \{simon.jenni,paolo.favaro\}@inf.unibe.ch}}

\maketitle

\begin{abstract}
We present a novel method and analysis to train generative adversarial networks (GAN) in a stable manner. As shown in recent analysis, training is often undermined by the probability distribution of the data being zero on neighborhoods of the data space. We notice that the distributions of real and generated data should match even when they undergo the same filtering. Therefore, to address the limited support problem we propose to train GANs by using different filtered versions of the real and generated data distributions. In this way, filtering does not prevent the exact matching of the data distribution, while helping training by extending the support of both distributions.
As filtering we consider adding samples from an arbitrary distribution to the data, which corresponds to a convolution of the data distribution with the arbitrary one. 
We also propose to learn the generation of these samples so as to challenge the discriminator in the adversarial training. We show that our approach results in a stable and well-behaved training of even the original minimax GAN formulation. Moreover, our technique can be incorporated in most modern GAN formulations and leads to a consistent improvement on several common datasets.
\end{abstract}

\section{Introduction}


Since the seminal work of \cite{goodfellow2014generative}, generative adversarial networks (GAN) have been widely used and analyzed due to the quality of the samples that they produce, in particular when applied to the space of natural images. Unfortunately, GANs still prove difficult to train. In fact, a vanilla implementation does not converge to a high-quality sample generator and heuristics used to improve the generator often exhibit an unstable behavior. This has led to a substantial work to better understand GANs (see, for instance, \cite{sonderby2016amortised,roth2017stabilizing,arjovsky2017towards}). In particular, \cite{arjovsky2017towards} points out how the unstable training of GANs is due to the (limited and low-dimensional) support of the data and model distributions. 
\begin{figure}[t!]
    \centering
    \begin{subfigure}[t]{.98\linewidth}
        \centering
        \includegraphics[width=\linewidth]{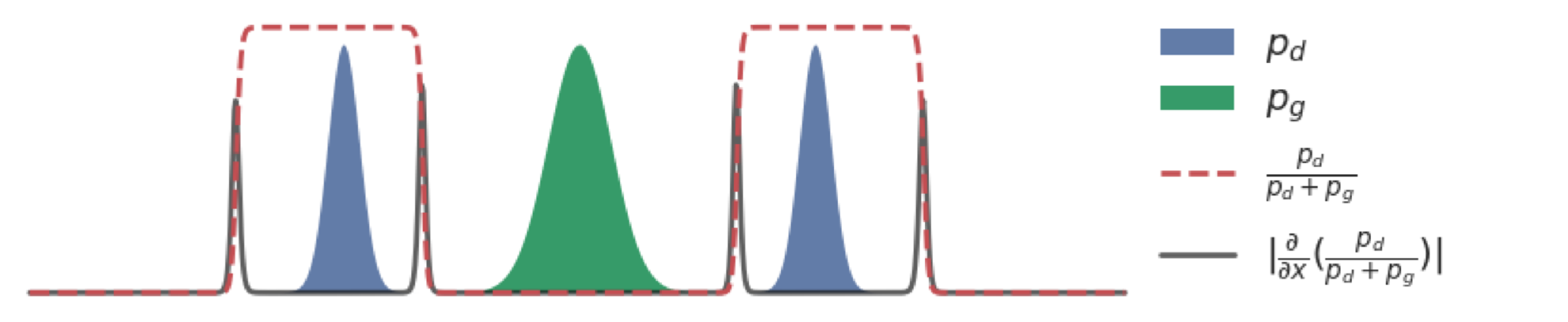}
        ~\vspace{-.8cm}
        \caption{}
        \label{fig:problem}
    \end{subfigure}
    \begin{subfigure}[t]{.98\linewidth}
        \centering
        \includegraphics[width=\linewidth]{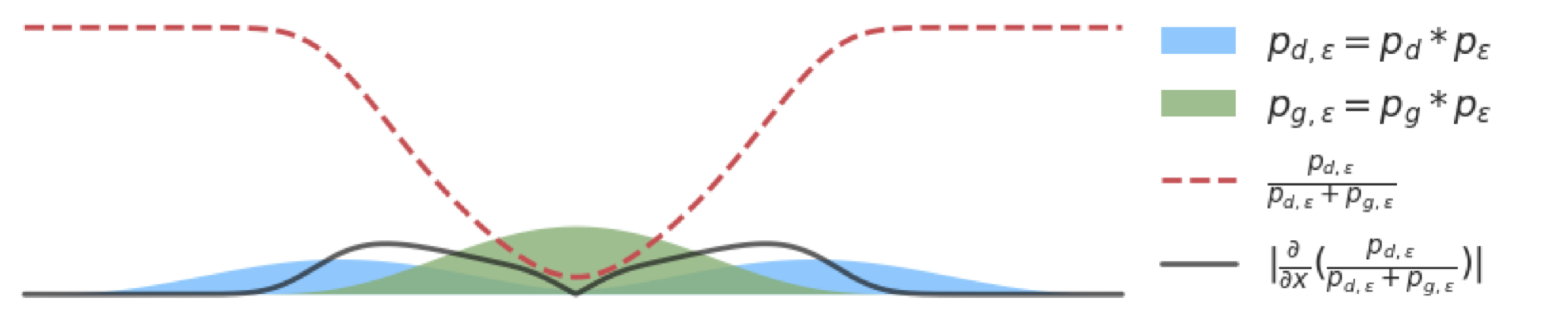}
        \vspace{-.8cm}
        \caption{}
        \label{fig:smoothing}
    \end{subfigure}
    \begin{subfigure}[t]{.98\linewidth}
        \centering
        \includegraphics[width=\linewidth]{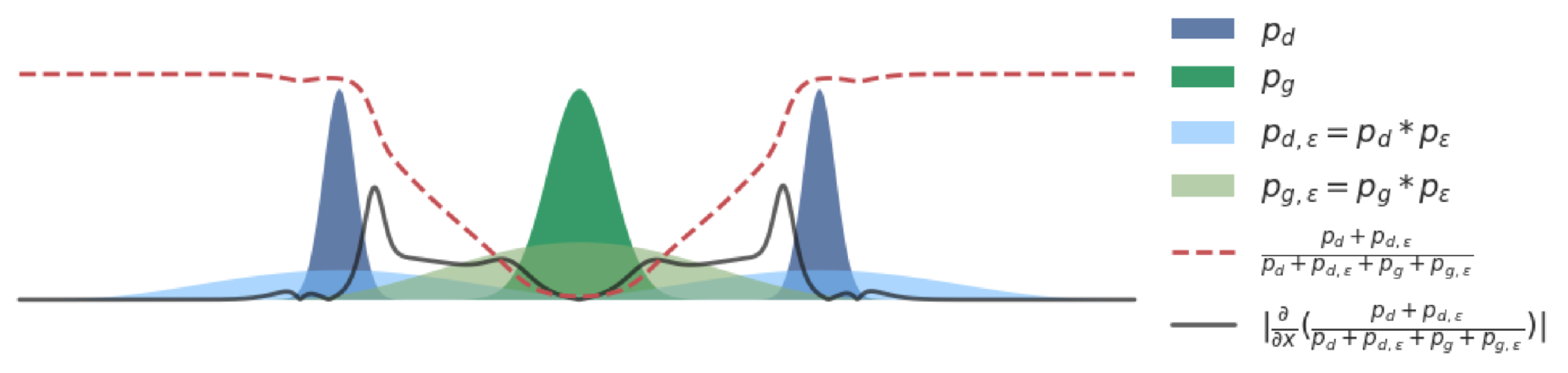}
        \vspace{-.8cm}
        \caption{}
        \label{fig:solution}
    \end{subfigure}
    \caption{
    (a) When the probability density functions of the real $p_d$ and generated data $p_g$ do not overlap, then the discriminator can easily distinguish samples. The gradient of the discriminator with respect to its input is zero in these regions and this prevents any further improvement of the generator. (b) Adding samples from an arbitrary $p_\epsilon$ to those of the real and the generated data results in the filtered versions $p_d\ast p_\epsilon$ and $p_g\ast p_\epsilon$. Because the supports of the filtered distributions overlap, the gradient of the discriminator is not zero and the generator can improve. However, the high-frequency content of the original distributions is missing. (c) By varying $p_\epsilon$, the generator can learn to match the data distribution accurately thanks to the extended supports. 
    }
  	\label{fig:sketch}
\end{figure}
In the original GAN formulation, the generator is trained against a discriminator in a minimax optimization problem. The discriminator learns to distinguish real from fake samples, while the generator learns to generate fake samples that can fool the discriminator. When the support of the data and model distributions is disjoint, the generator stops improving as soon as the discriminator achieves perfect classification, because this prevents the propagation of useful information to the generator through gradient descent (see Fig.~\ref{fig:problem}).

The recent work by \cite{arjovsky2017towards} proposes to extend the support of the distributions by adding noise to both generated and real images before they are fed as input to the discriminator. This procedure results in a smoothing of both data and model probability distributions, which indeed increases their support extent (see Fig.~\ref{fig:smoothing}). For simplicity, let us assume that the probability density function of the data is well defined and let us denote it with $p_{d}$. Then, samples $\tilde x = x+ \epsilon$, obtained by adding noise $\epsilon\sim p_\epsilon$ to the data samples $x \sim p_d$, are also instances of the probability density function $p_{d,\epsilon} = p_\epsilon \ast p_d$, where $\ast$ denotes the convolution operator. The support of $p_{d,\epsilon}$ is the Minkowski sum of the supports of $p_\epsilon$ and $p_d$ and thus larger than the support of $p_d$.
Similarly, adding noise to the samples from the generator probability density $p_g$ leads to the smoothed probability density $p_{g,\epsilon}=p_\epsilon \ast p_g$. 
Adding noise is a quite well-known technique that has been used in maximum likelihood methods, 
but is considered undesirable as it yields approximate generative models that produce low-quality blurry samples. 
Indeed, most formulations with additive noise boil down to finding the model distribution $p_{g}$ that best solves $p_{d,\epsilon}=p_{g,\epsilon}$. 
However, this usually results in a low quality estimate $p_g$ because $p_d\ast p_\epsilon$ has lost the high frequency content of $p_d$.
An immediate solution is to use a form of noise annealing, where the noise variance is initially high and is then reduced gradually during the iterations so that the original distributions, rather than the smooth ones, are eventually matched. This results in an improved training, but as the noise variance approaches zero, the optimization problem converges to the original formulation and the algorithm may be subject to the usual unstable behavior.

In this work, we 
design a novel adversarial training procedure that is stable and yields accurate results. 
We show that under some general assumptions it is possible to modify both the data and generated probability densities with additional noise without affecting the optimality conditions of the original noise-free formulation.
As an alternative to the original formulation, with $z\sim {\cal N}(0,I_d)$ and $x\sim p_{d}$,
\begin{align}
\displaystyle
\min_{G} \max_D \mathbb{E}_{x}[\log D(x)]+ \mathbb{E}_{z}[\log(1 - D(G(z)))], \label{eq:oldformulation}
\end{align}
where $D$ denotes the discriminator, we propose to train a generative model $G$ by solving instead the following optimization 
\begin{align}
\min_{G} \max_D &\sum_{p_\epsilon\in{\cal S}} \mathbb{E}_{\epsilon\sim p_\epsilon}\left[\mathbb{E}_{x\sim p_{d}}[\log D(x+\epsilon)]\right]+ \label{eq:newformulation}\\
&~~~~ \mathbb{E}_{\epsilon\sim p_\epsilon}\left[\mathbb{E}_{z\sim {\cal N}(0,I_d)}[\log(1 - D(G(z)+\epsilon))]\right],\nonumber
\end{align}
where we introduced a set $\cal S$ of probability density functions. 
If we solve the innermost optimization problem in Problem~\eqref{eq:newformulation}, then we obtain the optimal discriminator 
\begin{align}
D(x) = \frac{\sum_{p_\epsilon\in{\cal S}}p_{d,\epsilon}(x)}{\sum_{p_\epsilon\in{\cal S}}p_{d,\epsilon}(x)+p_{g,\epsilon}(x)},
\end{align}
where we have defined $p_{g}$ as the probability density of $G(z)$, where $z\sim {\cal N}(0,I_d)$. If we substitute this in the problem above and simplify we have
\begin{align}
\min_{G} \textstyle \text{JSD}\left(\frac{1}{|{\cal S}|}\sum_{p_\epsilon\in{\cal S}}p_{d,\epsilon},\frac{1}{|{\cal S}|}\sum_{p_\epsilon\in{\cal S}}p_{g,\epsilon}\right), 
\label{eq:simplifiedform}
\end{align}
where $\text{JSD}$ is the Jensen-Shannon divergence. We show that, under suitable assumptions, the optimal solution of Problem~\eqref{eq:simplifiedform} is unique and $p_g = p_d$. Moreover, since $\nicefrac{1}{|{\cal S}|}\sum_{p_\epsilon\in{\cal S}}p_{d,\epsilon}$ enjoys a larger support than $p_d$, the optimization via iterative methods based on gradient descent is more likely to achieve the global minimum, regardless of the support of $p_d$.
Thus, our formulation enjoys the following properties:
1) It defines a fitting of probability densities that is not affected by their support; 
2) It guarantees the exact matching of the data probability density function;
3) It can be easily applied to other GAN formulations.
A simplified scheme of the proposed approach is shown in Fig.~\ref{fig:dfgan}.
\begin{figure}[t!]
    \centering
        \includegraphics[width=\linewidth]{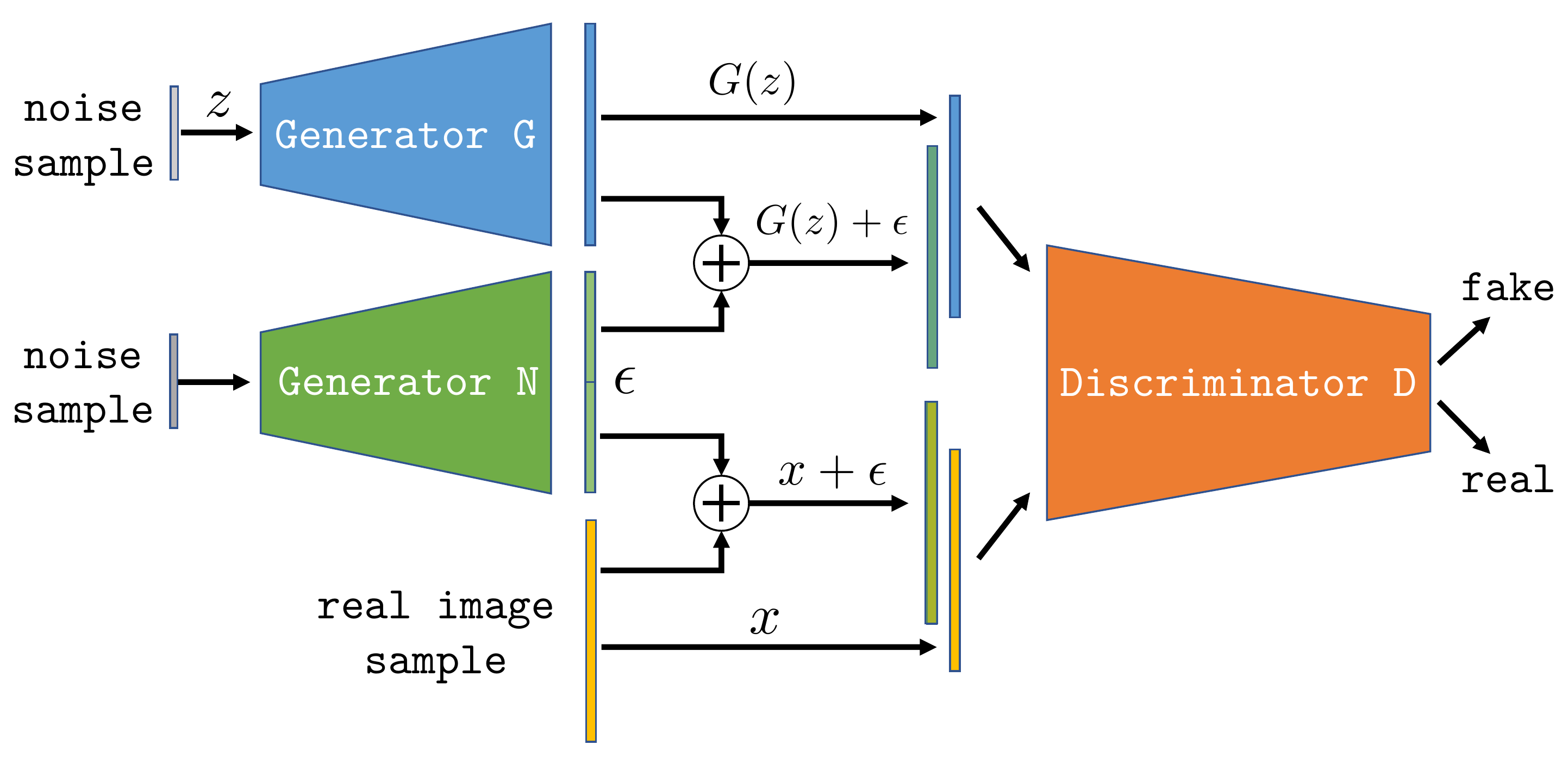}
    \caption{Simplified scheme of the proposed GAN training. We also show a noise generator $N$ that is explained in detail in Section~\ref{sec:model}. The discriminator $D$ needs to distinguish both noise-free and noisy real samples from fake ones.}
  	\label{fig:dfgan}
\end{figure}


In the next sections we introduce our analysis in detail and then devise a computationally feasible approximation of the problem formulation~\eqref{eq:newformulation}. Our method is evaluated quantitatively on CIFAR-10 \cite{krizhevsky2009learning}, STL-10 \cite{coates2011analysis}, and CelebA \cite{liu2015faceattributes}, and qualitatively on ImageNet \cite{russakovsky2015imagenet} and LSUN bedrooms \cite{Yu2015LSUNCO}.

\section{Related Work}
\label{sec:rel}

The inherent instability of GAN training was first addressed through a set of techniques and heuristics \cite{salimans2016improved} and careful architectural design choices and hyper-parameter tuning \cite{radford2015unsupervised}. \cite{salimans2016improved} proposes the use of one-sided label smoothing and the injection of Gaussian noise into the layers of the discriminator.
A theoretical analysis of the unstable training and the vanishing gradients phenomena was introduced by Arjovsky \etal \cite{arjovsky2017towards}. They argue that the main source of instability stems from the fact that the real and the generated distributions have disjoint supports or lie on low-dimensional manifolds. In the case of an optimal discriminator this will result in zero gradients that then stop the training of the generator. More importantly, they also provide a way to avoid such difficulties by introducing noise and considering ``softer'' metrics such as the Wasserstein distance. \cite{sonderby2016amortised} makes similar observations and also proposed the use of ``instance noise'' which is gradually reduced during training as a way to overcome these issues. \sj{ }{Another recent work stabilizes GAN training in a similar way by transforming examples before feeding them to the discriminator \cite{sajjadi2018tempered}. The amount of transformation is then gradually reduced during training. They  only transform the real examples, in contrast to \cite{sonderby2016amortised}, \cite{arjovsky2017towards} and our work. }  
\cite{arjovsky2017wasserstein} builds on the work of \cite{arjovsky2017towards} and introduces the Wasserstein GAN (WGAN). The WGAN optimizes an integral probability metric that is the dual to the Wasserstein distance. This formulation requires the discriminator to be Lipschitz-continuous, which is realized through weight-clipping. \cite{gulrajani2017improved} presents a better way to enforce the Lipschitz constraint via a gradient penalty over interpolations between real and generated data (WGAN-GP).
\cite{roth2017stabilizing} introduces a stabilizing regularizer based on a gradient norm penalty similar to that by \cite{gulrajani2017improved}. Its formulation however is in terms of f-divergences and is derived via an analytic approximation of adversarial training with additive Gaussian noise on the datapoints.
Another recent GAN regularization technique that bounds the Lipschitz constant of the discriminator is the spectral normalization introduced by \cite{miyato2018spectral}. This method demonstrates state-of-the-art in terms of robustness in adversarial training.
Several alternative loss functions and GAN models have been proposed over the years, claiming superior stability and sample quality over the original GAN (e.g., \cite{mao2017least}, \cite{zhao2016energy}, \cite{berthelot2017began}, \cite{arjovsky2017wasserstein}, \cite{zhao2016energy}, \cite{kodali2017train}). 
\sj{}{Adversarial noise generation has previously been used in the context of classification to improve the robustness against adversarial perturbations \cite{lee2017generative}.} 


\section{Matching Filtered Distributions}
\label{sec:convergence}

We are interested in finding a formulation that yields as optimal generator $G$ a sampler of the data probability density function (pdf) $p_d$, which we assume is well defined. The main difficulty in dealing with $p_d$ is that it may be zero on some neighborhood in the data space. An iterative optimization of Problem~\eqref{eq:oldformulation} based on gradient descent may yield a degenerate solution, \ie, such that the model pdf $p_g$ only partially overlaps with $p_d$ (a scenario called \emph{mode collapse}). It has been noticed that adding samples of an arbitrary distribution to both real and fake data samples during training helps reduce this issue. In fact, adding samples $\epsilon\sim p_\epsilon$ corresponds to blurring the original pdfs $p_d$ and $p_g$, an operation that is known to increase their support and thus their likelihood to overlap. This increased overlap means that iterative methods can exploit useful gradient directions at more locations and are then more likely to converge to the global solution.
By building on this observation, we propose to solve instead Problem~\eqref{eq:newformulation} and look for a way to increase the support of the data pdf $p_d$ without losing the optimality conditions of the original formulation of Problem~\eqref{eq:oldformulation}.

Our result below proves that this is the case for some choices of the additive noise.
We consider images of $m\times n$ pixels and with values in a compact domain $\Omega\subset\real^{m\times n}$, since image intensities are bounded from above and below. Then, also the support of the pdf $p_d$ is bounded and contained in $\Omega$. This implies that $p_d$ is also $L^2(\Omega)$.
\begin{theorem}
Let us choose ${\cal S}$ such that Problem~\eqref{eq:simplifiedform} can be written as
\begin{align}
\min_{p_g} \text{JSD}\left(\frac{1}{2}(p_{d}+p_{d}\ast p_\epsilon),\frac{1}{2}(p_g+p_g\ast p_\epsilon)\right),
\label{eq:simplifiedform2}
\end{align}
where $p_\epsilon$ is a non-degenerate probability density function in $L^2(\Omega)$.
Let us also assume that the domain of $p_g$ is restricted to $\Omega$ (and thus $p_g\in L^2(\Omega)$).
Then, the global optimum of Problem~\eqref{eq:simplifiedform2} is $p_g(x)=p_d(x)$, $\forall x\in\Omega$.
\end{theorem}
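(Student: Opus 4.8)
The plan is to reduce the minimization to a convolution equation using the standard characterization of the Jensen--Shannon divergence, and then resolve that equation in the Fourier domain. First I would observe that, since $p_d$ and $p_\epsilon$ are probability densities, both $\tfrac12(p_d+p_d\ast p_\epsilon)$ and $\tfrac12(p_g+p_g\ast p_\epsilon)$ are again probability densities, and that the JSD of two densities is non-negative and vanishes exactly when the two coincide. Hence the global minimum value of Problem~\eqref{eq:simplifiedform2} is $0$, attained precisely at those $p_g$ for which $p_d+p_d\ast p_\epsilon = p_g+p_g\ast p_\epsilon$. Since $p_g=p_d$ is visibly one such minimizer, the whole content of the theorem is uniqueness. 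Writing $h := p_d-p_g$ --- which lies in $L^1(\Omega)\cap L^2(\Omega)$ because $\Omega$ is compact and $p_d,p_g\in L^2(\Omega)$ --- the claim reduces to showing that the only $h$ supported in $\Omega$ with $h+h\ast p_\epsilon=0$ is $h\equiv 0$.

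Next I would pass to the Fourier domain: since $h,p_\epsilon\in L^1$, the convolution theorem turns $h+h\ast p_\epsilon=0$ into the pointwise identity $\hat h(\xi)\bigl(1+\hat p_\epsilon(\xi)\bigr)=0$ for all frequencies $\xi$. It then suffices to prove that $1+\hat p_\epsilon$ is nowhere zero, i.e. that $\hat p_\epsilon(\xi)\neq -1$ for every $\xi$; given that, $\hat h\equiv 0$, hence $h\equiv 0$ by uniqueness of the Fourier transform on $L^1$, and therefore $p_g=p_d$ on $\Omega$.

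To establish that $\hat p_\epsilon$ avoids the value $-1$, I would use that $\hat p_\epsilon(0)=\int p_\epsilon=1$ and $|\hat p_\epsilon(\xi)|\le\int p_\epsilon=1$ for all $\xi$. Equality $|\hat p_\epsilon(\xi)|=1$ would force $e^{-2\pi i\,\xi\cdot x}$ to be a.e.\ constant on the set $\{p_\epsilon>0\}$; for $\xi\neq 0$ the set of such $x$ is a countable union of parallel hyperplanes, hence Lebesgue-null, whereas $\{p_\epsilon>0\}$ has positive measure because $p_\epsilon$ is a non-degenerate density in $L^2(\Omega)$. So $|\hat p_\epsilon(\xi)|<1$ for $\xi\neq 0$ while $\hat p_\epsilon(0)=1$, and in either case $\hat p_\epsilon(\xi)\neq -1$.

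I expect this last step --- excluding the value $-1$ for $\hat p_\epsilon$ --- to be the only real obstacle, and it is precisely where the non-degeneracy of $p_\epsilon$ is used: were $p_\epsilon$ a point mass, or concentrated on a lattice coset, $1+\hat p_\epsilon$ could vanish at some frequency and uniqueness would fail. The remaining ingredients (the ``zero iff equal'' property of the JSD and the convolution theorem for $L^1$ functions) are routine, so once the non-vanishing of $1+\hat p_\epsilon$ is secured the argument closes immediately.
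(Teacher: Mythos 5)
Your proposal is correct and follows essentially the same route as the paper's proof: reduce the JSD minimization to the convolution identity $h+h\ast p_\epsilon=0$, pass to the Fourier domain to get $\hat h(\xi)\bigl(1+\hat p_\epsilon(\xi)\bigr)=0$, and exclude $\hat p_\epsilon(\xi)=-1$ by noting this would force $p_\epsilon$ to concentrate on a Lebesgue-null union of parallel hyperplanes, contradicting non-degeneracy. The only cosmetic difference is that you rule out the value $-1$ via the standard modulus bound $|\hat p_\epsilon(\xi)|\le 1$ with its equality case, whereas the paper splits the characteristic function into its cosine and sine parts and argues the same concentration obstruction directly on the real part.
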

\begin{proof}
The global minimum of the Jensens-Shannon divergence is achieved if and only if
\begin{align}
p_{d}+p_{d}\ast p_\epsilon = p_g+p_g\ast p_\epsilon.
\label{eq:optimum}
\end{align}
Let $p_g = p_d + \Delta$. Then, we have $\int \Delta(x) dx = 0$ and 
$\int |\Delta(x)|^2 dx < \infty$.
By substituting $p_g$ in eq.~\eqref{eq:optimum} we obtain
$ \Delta\ast p_\epsilon = -\Delta$.
Since $\Delta$ and $p_\epsilon$ are in $L^2(\Omega)$, we can take the Fourier transform of both sides, and obtain
\begin{align}
\hat \Delta(\omega)\left(1+ \hat p_\epsilon (\omega) \right) = 0,\quad \forall\omega\in\hat\Omega.
\label{eq:optimality}
\end{align}
If $\Delta(x)\neq 0$ for some $x$, then there exists $\omega^\ast$ such that $\Delta(\omega^\ast)\neq 0$, and thus
$1+ \hat p_\epsilon (\omega^\ast) = 0$. This means that 
\begin{align}
\int p_\epsilon(x) e^{-j x^\top\omega^\ast}dx = -1
\label{eq:critical}
\end{align}
or, equivalently,
\begin{align}
\int p_\epsilon(x) \cos(x^\top \omega^\ast) dx &= -1\\
\int p_\epsilon(x) \sin(x^\top \omega^\ast) dx &= 0.
\end{align}
Notice that
\begin{align}
\int p_\epsilon(x) \cos(x^\top \omega^\ast) dx > - \int p_\epsilon(x) dx = -1
\label{eq:cond}
\end{align}
unless $p_\epsilon(x) = 0$ for any $x$ such that $x^\top \omega^\ast \neq \pi + 2k \pi$, with $k\in \mathbb{Z}$.
Since $p_\epsilon$ is not degenerate, then eq.~\eqref{eq:cond} holds, and eq.~\eqref{eq:critical} cannot be true, 
which leads to $\Delta(x) = 0$ for all $x\in \Omega$, and we can conclude that $p_g(x)=p_d(x)$, $\forall x\in\Omega$.
\end{proof}


\begin{algorithm}[!t]
\SetAlgoLined
\small
\KwIn{Training set $\mathcal{D}\sim p_d$, number of discriminator updates $n_{disc}$, number of training iterations $N$, batch-size $m$, learning rate $\alpha$, noise penalty $\lambda$} 
\KwOut{Generator parameters $\theta$}
Initialize generator parameters $\theta$, discriminator parameters $\phi$ and noise-generator parameters $\omega$ \;
\For{$1\ldots N$}{
	\For{$1\ldots n_{disc}$}{
 	   	Sample $\{x_1, \ldots ,x_m \}\sim p_d$, $\{\tilde x_1, \ldots ,\tilde x_m \}\sim p_g$ and $\{\epsilon_1, \ldots ,\epsilon_m \}\sim p_\epsilon$  \;
 	   	$L_D^r=\sum_{i=1}^m\ln(D(x_i))+\ln(D(x_i+\epsilon_i))$\;
 	   	$L_D^f=\sum_{i=1}^m\ln(1-D(\tilde x_i))+\ln(1-D(\tilde x_i+\epsilon_i))$\;
 	   	$L_\epsilon=\sum_{i=1}^m |\epsilon_i|^2 $\;
 	   	$\phi \leftarrow \phi + \nabla_\phi L_D^r(\phi, \omega) + \nabla_\phi L_D^f(\phi, \omega)$\;
 	   	$\omega \leftarrow \omega - \nabla_\omega \big(L_D^r(\phi, \omega)+L_D^f(\phi, \omega)+\lambda L_\epsilon (\omega) \big)$\;
	}
  	Sample $\{\tilde x_1, \ldots ,\tilde x_m \}\sim p_g$ and $\{\epsilon_1, \ldots ,\epsilon_m \}\sim p_\epsilon$ \;
	$L_G^f=\sum_{i=1}^m\ln(D(\tilde x_i))+\ln(D(\tilde x_i+\epsilon_i))$\;

 	$\theta \leftarrow \theta + \nabla_\theta L_G^f(\theta)$\;
}
\caption{\small Distribution Filtering GAN (DFGAN)}
\label{alg:1}
\end{algorithm}

\subsection{Formulation}
\label{sec:model}

Based on the above theorem we consider two cases: 
\begin{enumerate}
\item \textbf{Gaussian noise} with a fixed/learned standard deviation $\sigma$: 
$
p_\epsilon(\epsilon) = {\cal N}(\epsilon;0,\sigma I_d)
$;
\item\textbf{Learned noise} from a noise generator network $N$ with parameters $\sigma$: 
$
p_\epsilon(\epsilon) \text{ such that } \epsilon = N(w,\sigma), \text{ with } w\sim {\cal N}(0,I_d).
$
\end{enumerate}
In both configurations we can learn the parameter(s) $\sigma$. We do so by minimizing the cost function after the maximization with respect to the discriminator. The minimization encourages large noise since this would make $p_{d,\epsilon}(\omega)$ more similar to $p_{g,\epsilon}(\omega)$ regardless of $p_d$ and $p_g$. This would not be very useful to gradient descent. Therefore, to limit the noise magnitude we introduce as a regularization term the noise variance $\Gamma(\sigma) = \sigma^2$ or the Euclidean norm of the noise output image $\Gamma(\sigma) = \mathbb{E}_{w\sim {\cal N}(0,I_d)}|N(w,\sigma)|^2$, and multiply it by a positive scalar $\lambda$, which we tune.

The proposed formulations can then be written in a unified way as:
\begin{align}
\begin{aligned}	
\min_{G}  \min_{\sigma} \max_D  \lambda \Gamma+
\mathbb{E}_{x}\Big[\log D(x)+\mathbb{E}_{\epsilon}\log D(x+\epsilon)\Big] + \\ 
\mathbb{E}_{z}\Big[\log[1 - D(G(z))]+\mathbb{E}_{\epsilon}\log[1-D(G(z)+\epsilon)]\Big]. 
\end{aligned}
\label{eq:formulationGauss}
\end{align}

\subsection{Implementation}\label{sec:impl}
Implementing our algorithm only requires a few minor modifications of the standard GAN framework. We perform the update for the noise-generator and the discriminator in the same iteration. Mini-batches for the discriminator are formed by collecting all the fake and real samples in two separate batches, \ie, $\{x_1, \ldots ,x_m, x_1+\epsilon_1, \ldots ,x_m+\epsilon_m \}$ is the batch with real examples and $\{\tilde x_1, \ldots ,\tilde x_m, \tilde x_1+\epsilon_1, \ldots ,\tilde x_m+\epsilon_m \}$ the fake examples batch. The complete procedure is outlined in Algorithm~\ref{alg:1}.
The noise-generator architecture is typically the same as the generator, but with a reduced number of convolutional filters. Since the inputs to the discriminator are doubled when compared to the standard GAN framework, the DFGAN framework can be $1.5$ to $2$ times 
slower. Similar and more severe performance drops are present in existing variants (\eg, WGAN-GP). Note that by constructing the batches as $\{x_1, \ldots ,x_{m/2}, x_{m/2+1}+\epsilon_1, \ldots ,x_m+\epsilon_m \}$ the training time is instead comparable to the standard framework,
but it is much more stable and yields an accurate generator. For a comparison of the runtimes, see Fig.~\ref{fig:converg}.

 \begin{figure}[t]
    \centering
    \begin{subfigure}{.49\linewidth}
        \centering
        \includegraphics[width=\linewidth]{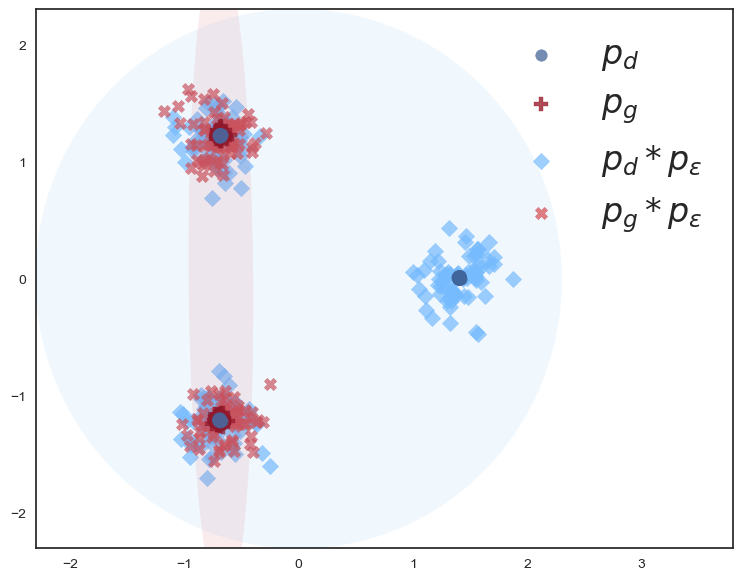}
        \caption{}\label{fig:nobn}
    \end{subfigure}
    \begin{subfigure}{.49\linewidth}
        \centering
        \includegraphics[width=\linewidth]{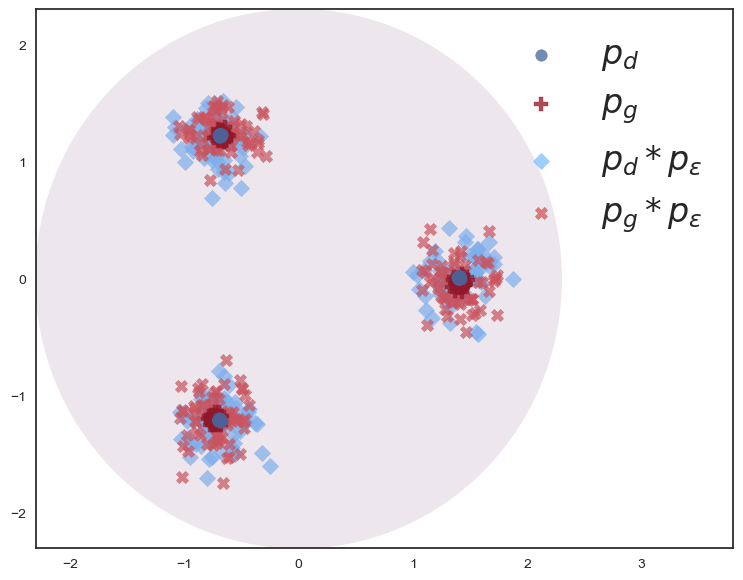}
        \caption{}\label{fig:nobn}
    \end{subfigure}
    \caption{Illustration of how separate normalization of fake and real mini-batches discourages mode collapse. In (a) no normalization is applied and mode collapse is observed. Since the covered modes are indistinguishable, the generator receives no signal that encourages better mode coverage. In (b) separate normalization of the real and fake data is applied. The mismatch in the batch statistics (mean and standard deviation) can now be detected by the discriminator, forcing the generator to improve.   }
  	\label{fig:bn}
\end{figure}

\subsection{Batch-Normalization and Mode Collapse}\label{sec:bn}
The current best practice is to apply batch normalization to the discriminator separately on the real and fake mini-batches \cite{Soumith2016}. Indeed, this showed much better results when compared to feeding mini-batches with a 50/50 mix of real and fake examples in our experiments. The reason for this is that batch normalization implicitly takes into account the distribution of examples in each mini-batch. To see this, consider the example in Fig.~\ref{fig:bn}. In the case of no separate normalization of fake and real batches we can observe mode-collapse. The modes covered by the generator are indistinguishable for the discriminator, which observes each example independently. There is no signal to the generator that leads to better mode coverage in this case. Since the first two moments of the fake and real batch distribution are clearly not matching, a separate normalization will help the discriminator distinguish between real and fake examples and therefore encourage better mode coverage by the generator.

Using batch normalization in this way turns out to be crucial for our method as well. Indeed, when no batch normalization is used in the discriminator, the generator will often tend to produce noisy examples. This is difficult to detect by the discriminator, since it judges each example independently. To mitigate this issue we apply separate normalization of the noisy real and fake examples before feeding them to the discriminator. We use this technique for models without batch normalization (\eg SNGAN).

\begin{figure}[t]
    \centering
    \includegraphics[width=1\linewidth,trim=1.1cm 0 1.5cm 1.45cm,clip ]{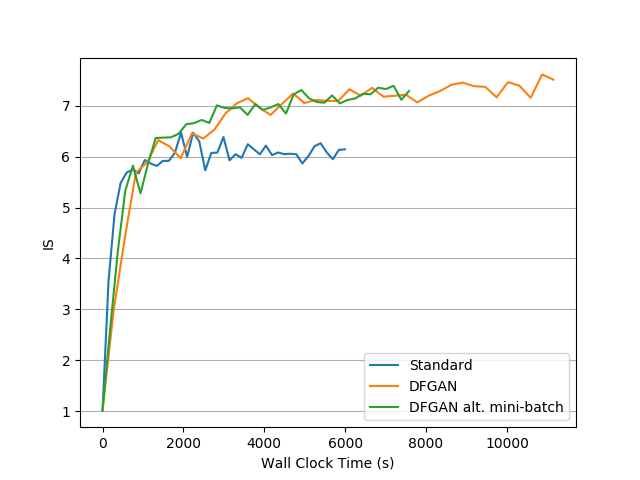}
    \caption{\sj{}{A comparison of wall clock time vs \texttt{IS} for GANs with and without distribution filtering. The models use the architecture specified in Table \ref{tab:cifar_net} and were trained on CIFAR-10. The computational overhead introduced by our method does not negatively affect the speed of convergence.} }
  	\label{fig:converg}
\end{figure}

\section{Experiments}
\label{sec:exp}

We compare and evaluate our model using two common GAN metrics: the Inception score \texttt{IS} \cite{salimans2016improved} and the Fr\'echet Inception distance \texttt{FID} \cite{heusel2017gans}. Throughout this section we use 10K generated and real samples to compute \texttt{IS} and \texttt{FID}. In order to get a measure of the stability of the training we report the mean and standard deviation of the last five checkpoints for both metrics (obtained in the last 10\% of training). 
More reconstructions, experiments and details are provided in the supplementary material.

\subsection{Ablations} 
To verify our model we perform ablation experiments on two common image datasets: CIFAR-10 \cite{krizhevsky2009learning} and STL-10 \cite{coates2011analysis}. For CIFAR-10 we train on the 50K $32\times32$ RGB training images and for STL-10 we resize the 100K $96\times96$ training images to $64\times64$. The network architectures resemble the DCGAN architectures of \cite{radford2015unsupervised} and are detailed in Table \ref{tab:cifar_net}. All the models are trained for 100K generator iterations using a mini-batch size of 64. We use the ADAM optimizer \cite{kingma2014adam} with a learning rate of $10^{-4}$ and $\beta_1=0.5$. Results on the following ablations are reported in Table \ref{tab:abl}:

\begin{table}[t]
\centering
\caption{Network architectures used for experiments on CIFAR-10 and STL-10. Images are assumed to be of size $32\times32$ for CIFAR-10 and $64\times64$ for STL-10. We set $M=512$ for CIFAR-10 and $M=1024$ for STL-10. Layers in parentheses are only included for STL-10. The noise-generator network follows the generator architecture with the number of channels reduced by a factor of 8. BN indicates the use of batch-normalization \cite{ioffe2015batch}.  }
\label{tab:cifar_net}
\footnotesize
\begin{tabular}{@{}l@{\hspace{.15cm}}c@{}}
\begin{tabular}{@{}l@{}}
\toprule 
\textbf{Generator CIFAR-10/(STL-10) }                     \\ \midrule
$z \in \mathbb{R}^{128} \sim \mathcal{N}(0, I)$ \\ 
fully-conn. BN ReLU $4\times 4\times M$    \\ 
(deconv $4\times4$ str.=2 BN ReLU 512)			\\
deconv $4\times4$ str.=2 BN ReLU 256          \\ 
deconv $4\times4$ str.=2 BN ReLU 128          \\ 
deconv $4\times4$ str.=2 BN ReLU 64           \\ 
deconv $3\times3$ str.=1 \texttt{tanh} 3                 \\ \bottomrule
\end{tabular}
&
\begin{tabular}{@{}l@{}}
\toprule
\textbf{Discriminator CIFAR-10/(STL-10)}       \\ \midrule
conv $3\times3$ str.=1 lReLU 64     \\
conv $4\times4$ str.=2 BN lReLU 64  \\
conv $4\times4$ str.=2 BN lReLU 128 \\
conv $4\times4$ str.=2 BN lReLU 256 \\
conv $4\times4$ str.=2 BN lReLU 512 \\
(conv $4\times4$ str.=2 BN lReLU 1024) \\
fully-connected \texttt{sigmoid} 1                \\ \bottomrule
\end{tabular}
\end{tabular}

\end{table}

\begin{description}
	\item [(a)-(c) Only noisy samples:] In this set of experiments we only feed noisy examples to the discriminator. In experiment \textbf{(a)} we add Gaussian noise and in \textbf{(b)} we add learned noise. In both cases the noise level is not annealed. While this leads to stable training, the resulting samples are of poor quality which is reflected by high \texttt{FID} and low \texttt{IS}. The generator will tend to also produce noisy samples since there is no incentive to remove the noise.
	 Annealing the added noise during training as proposed by \cite{arjovsky2017towards} and \cite{sonderby2016amortised} leads to an improvement over the standard GAN. This is demonstrated in experiment \textbf{(c)}. The added Gaussian noise is linearly annealed during the 100K iterations in this case;
	\item [(d)-(i) Both noisy and clean samples:] The second set of experiments consists of variants of our proposed model. Experiments \textbf{(d)} and \textbf{(e)} use a simple Gaussian noise model; in \textbf{(e)} the standard deviation of the noise $\sigma$ is learned. We observe a drastic improvement in the quality of the generated examples even with this simple modification. The other experiments show results of our full model with a separate noise-generator network. We vary the weight $\lambda$ of the $L^2$ norm of the noise in experiments \textbf{(f)-(h)}. Ablation \textbf{(i)} uses the alternative mini-batch construction with faster runtime as described in Section \ref{sec:impl};
\end{description}
%
%
%
%
\begin{table*}[h]
\centering
\caption{We perform ablation experiments on CIFAR-10 and STL-10 to demonstrate the effectiveness of our proposed algorithm. Experiments (a)-(c) show results where only filtered examples are fed to the discriminator. Experiment (c) corresponds to previously proposed noise-annealing and results in an improvement over the standard GAN training. Our approach of feeding both filtered and clean samples to the discriminator shows a clear improvement over the baseline. }
\label{tab:abl}
\begin{tabular*}{\textwidth}{@{}l@{\extracolsep{\fill}}cccc@{}}
\toprule 
\multirow{2}{*}{Experiment} & \multicolumn{2}{c}{\textbf{CIFAR-10}} & \multicolumn{2}{c}{\textbf{STL-10}} \\
                            & \texttt{FID}      & \texttt{IS} 		& \texttt{FID}        & \texttt{IS}        			\\ \midrule 
Standard GAN   &   $46.1\pm0.7$          &     $6.12\pm.09$       &    $78.4\pm6.7$     &  $8.22\pm.37$                \\ \midrule
(a) Noise only: $\epsilon \sim \mathcal{N}(0, I)$   &   $94.9\pm4.9$    &  $4.68\pm.12$  & $107.9\pm2.3$    &  $6.48\pm.19$     \\ 
(b) Noise only: $\epsilon$ learned    &   $69.0\pm3.4$    &  $5.05\pm.14$    &   $107.2\pm3.4$        &     $6.39\pm.22$            \\
(c) Noise only: $\epsilon \sim \mathcal{N}(0,\sigma I)$, $\sigma \rightarrow 0$ & $44.5\pm3.2$  &  $6.85\pm.20$ &  $75.9\pm1.9$  &  $8.49\pm.19$                      \\
\midrule
(d) Clean + noise: $\epsilon \sim \mathcal{N}(0,I)$   &   $29.7\pm0.6$  &  $7.16\pm.05$  &  $66.5\pm2.3$   & $8.64\pm.17$    \\
(e) Clean + noise: $\epsilon \sim \mathcal{N}(0,\sigma I)$ with learnt $\sigma$   &   $28.8\pm0.7$  &  $7.23\pm.14$  &  $71.3\pm1.7$   & $8.30\pm.12$    \\
(f) DFGAN $(\lambda=0.1 )$   &     $27.7\pm0.8$   &  $7.31\pm.06$    &     $\textbf{63.9}\pm1.7$       &       $\textbf{8.81}\pm.07$                 \\ 
(g) DFGAN $(\lambda=1 )$     &     $\textbf{26.5}\pm0.6$   &  $\textbf{7.49}\pm.04$    &      $64.0\pm1.4$      &         $8.52\pm.16$               \\ 
(h) DFGAN $(\lambda=10 )$    &      $29.8\pm0.4$  &  $6.55\pm.08$    &       $66.9\pm3.2$     &          $8.38\pm.20$              \\ 
\midrule
(i) DFGAN alt. mini-batch $(\lambda=1)$   &      $28.7\pm0.6$  &  $7.3\pm.05$    &       $67.8\pm3.2$     &          $8.30\pm.11$              \\ 
\bottomrule
\end{tabular*}
\end{table*}

\noindent\textbf{Application to Different GAN Models.} We investigate the possibility of applying our proposed training method to several standard GAN models. The network architectures are the same as proposed in the original works with only the necessary adjustments to the given image-resolutions of the datasets (\ie, truncation of the network architectures). The only exception is SVM-GAN, where we use the architecture in Table \ref{tab:cifar_net}. Note that for the GAN with minimax loss (MMGAN) and WGAN-GP we use the architecture of DCGAN. Hyper-parameters are kept at their default values for each model. The models are evaluated on two common GAN benchmarks: CIFAR-10 \cite{krizhevsky2009learning} and CelebA \cite{liu2015faceattributes}. The image resolution is $32\times32$ for CIFAR-10 and $64\times64$ for CelebA. All models are trained for 100K generator iterations. For the alternative objective function of LSGAN and SVM-GAN we set the loss of the noise generator to be the negative of the discriminator loss, as is the case in our standard model. The results are shown in Table \ref{tab:appl}. We can observe that applying our training method improves performance in most cases and even enables the training with the original saturation-prone minimax GAN objective, which is very unstable otherwise. Note also that applying our method to SNGAN \cite{miyato2018spectral} (the current state-of-the-art) leads to an improvement on both datasets. We also evaluated SNGAN with and without our method on $64 \times 64$ images of  STL-10 (same as in Table \ref{tab:abl}) where our method boosts the performance from an FID of $66.3\pm1.1$ to $58.3\pm1.4$. We show random CelebA reconstructions from models trained with and without our approach in Fig.~\ref{fig:celeb_comp}.

\begin{table}[t]
\centering
\caption{We apply our proposed GAN training to various previous GAN models trained on CIFAR-10 and CelebA. The same network architectures and hyperparameters as in the original works are used (for SVM-GAN we used the network in Table \ref{tab:cifar_net}). We observe that our method increases performance in most cases even with the suggested hyperparameter settings. Note that our method also allows successful training with the original minimax MMGAN loss as opposed to the commonly used heuristic (\eg, in DCGAN). }
\label{tab:appl}
\resizebox{\linewidth}{!}{\begin{tabular}{@{}lccc@{}}
\toprule 
\multirow{2}{*}{Model} & \multicolumn{2}{c}{\textbf{CIFAR-10}} & \textbf{CelebA} \\
                            & \texttt{FID}      & \texttt{IS} 		& \texttt{FID}            			\\ \midrule 
MMGAN \cite{goodfellow2014generative}  &   $>450$          &     $\sim1$       &     $>350$         \\ 
DCGAN \cite{radford2015unsupervised}   &   $33.4\pm0.5$          &     $6.73\pm.07$       &     $25.4\pm2.6$         \\ 
WGAN-GP \cite{gulrajani2017improved}   &   $37.7\pm0.4$          &     $6.55\pm.08$       &     $15.5\pm0.2$       \\ 
LSGAN \cite{mao2017least}  &   $38.7\pm1.8$          &     $6.73\pm.12$       &     $21.4\pm1.1$    \\ 
SVM-GAN \cite{lim2017geometric}  &   $43.9\pm1.0$          &     $6.25\pm.09$       &     $26.5\pm1.9$    \\ 

SNGAN (\cite{miyato2018spectral}  &   $29.1\pm0.4$          &     $7.26\pm.06$       &     $13.2\pm0.3$        \\ \midrule
MMGAN +DF ($\lambda=0.1$)  &   $33.1\pm0.7$          &     $6.91\pm.05$       &     $16.6\pm1.9$      \\ 
DCGAN + DF ($\lambda=10$)  &   $31.2\pm0.3$     &    $6.95\pm.11$     &     $14.7\pm1.0$       \\ 
LSGAN + DF  ($\lambda=10$)  &   $36.7\pm1.2$        &     $6.63\pm.17$     &     $19.9\pm0.4$        \\
SVM-GAN + DF  ($\lambda=1$)  &   $28.7\pm1.1$          &     $7.31\pm.11$       &     $12.7\pm0.7$         \\  
SNGAN + DF ($\lambda=1$)  & $\textbf{25.9}\pm0.3$      &    $\textbf{7.47}\pm.08$      &     $\textbf{10.5}\pm0.4$        \\ 
\bottomrule
\end{tabular}}
\end{table}

\begin{figure*}[t!]
    \centering
    \begin{subfigure}[t]{0.5\linewidth}
        \centering
        \adjincludegraphics[width=8.5cm,trim={0 0 0 {.0\height}},clip]{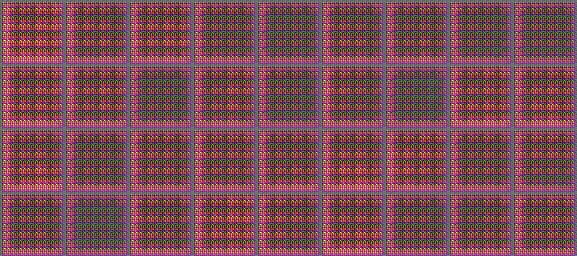}
        \caption{Original GAN without DF}
    \end{subfigure}%
    \begin{subfigure}[t]{0.5\linewidth}
        \centering
        \adjincludegraphics[width=8.5cm,trim={0 0 0 {.0\height}},clip]{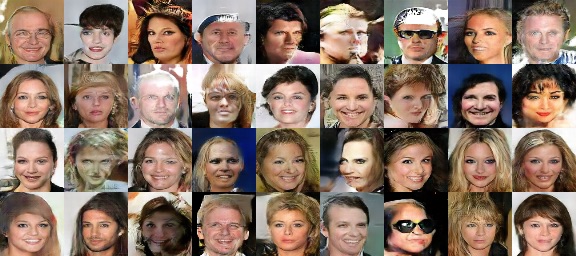}

        \caption{Original GAN with DF}
    \end{subfigure}
    \begin{subfigure}[t]{0.5\linewidth}
        \centering
        \adjincludegraphics[width=8.5cm,trim={0 0 0 {.0\height}},clip]{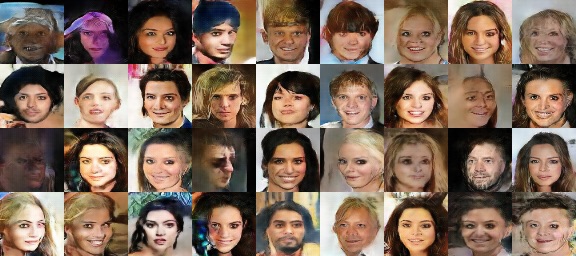}
        \caption{DCGAN without DF}
    \end{subfigure}%
    \begin{subfigure}[t]{0.5\linewidth}
        \centering
        \adjincludegraphics[width=8.5cm,trim={0 0 0 {.0\height}},clip]{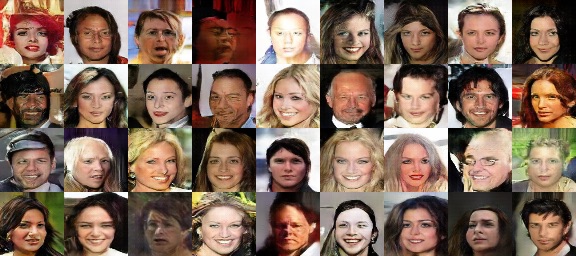}
        \caption{DCGAN with DF}
    \end{subfigure}
    \caption{Left column: Random reconstructions from models trained on CelebA without distribution filtering (DF). Right column: Random reconstructions with our proposed method.}
  	\label{fig:celeb_comp}
\end{figure*}

\begin{figure*}[t!]
    \centering
    \begin{subfigure}[t]{\textwidth}
        \centering
 		\adjincludegraphics[width=\linewidth,trim={0 {.25\height} 0 0},clip]{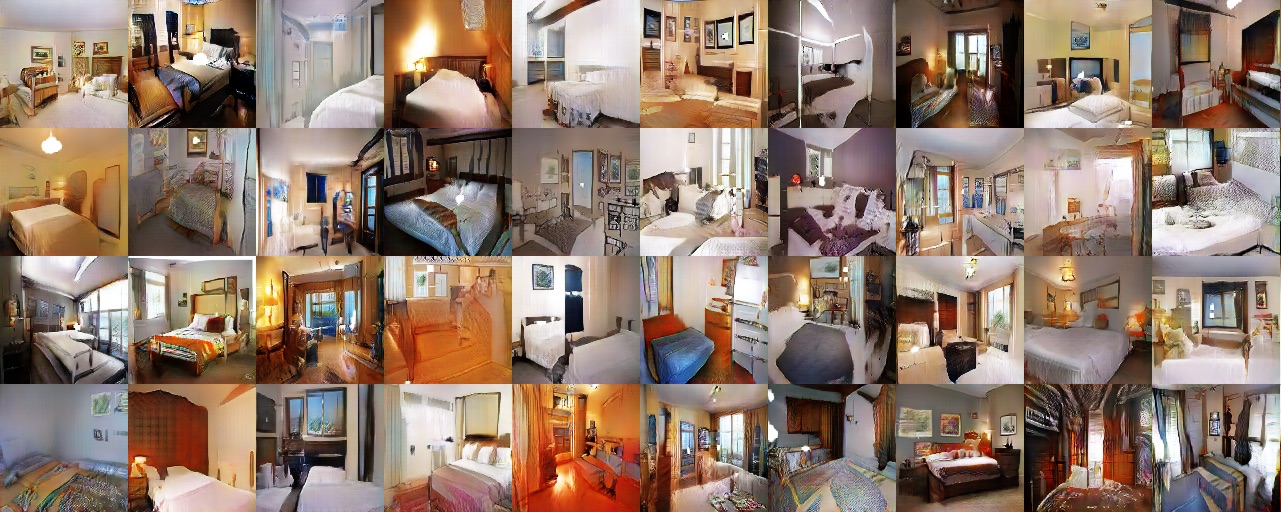}

    \end{subfigure}\vspace{0.1cm}
    \begin{subfigure}[t]{\textwidth}
        \centering
 		\adjincludegraphics[width=\linewidth,trim={0 {.25\height} 0 0},clip]{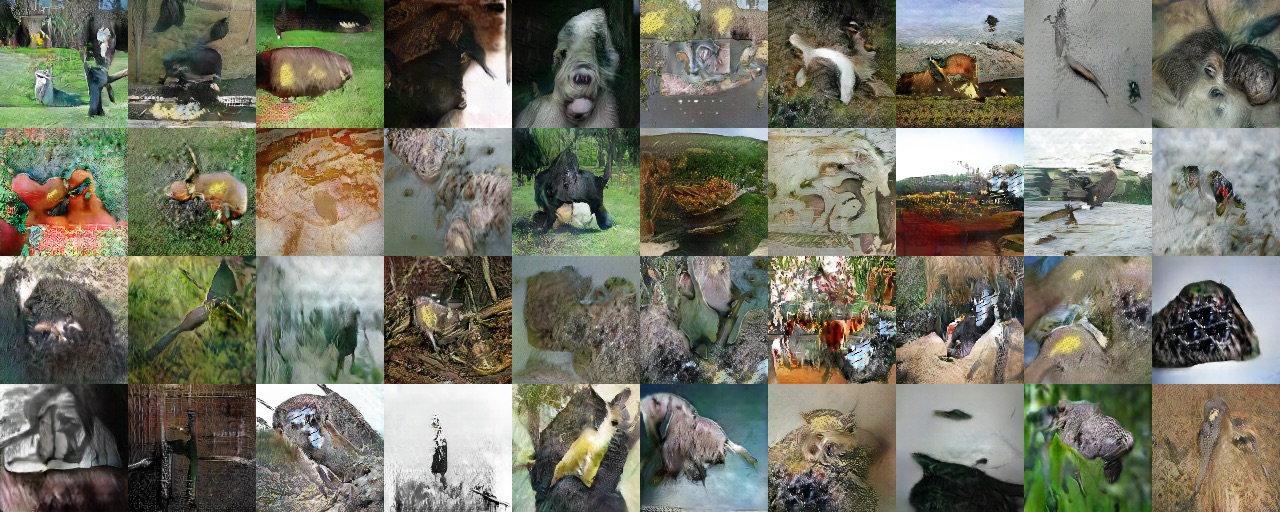}

    \end{subfigure}
    \caption{Reconstructions from DFGANs trained on $128 \times 128$ images from the LSUN bedrooms dataset (\emph{top}) and ImageNet (\emph{bottom}).}
  	\label{fig:lsun_imnet}
\end{figure*}
\noindent\textbf{Robustness to Hyperparameters.} We test the robustness of DFGANs with respect to various hyperparamters by training on CIFAR-10 with the settings listed in Table~\ref{tab:set}. The network is the same as specified in Table~\ref{tab:cifar_net}. The noise penalty term is set to $\lambda=0.1$. We compare to a model without our training method (Standard), a model with the gradient penalty regularization proposed by \cite{roth2017stabilizing} (GAN+GP) and a model with spectral normalization (SNGAN). To the best of our knowledge, these methods are the current state-of-the-art in terms of GAN stabilization. Fig.~\ref{fig:exp} shows that our method is stable and accurate across all settings.

\begin{figure*}[!h]
    \centering
    \includegraphics[width=\linewidth,trim=5.5cm 0 5.5cm 0,clip ]{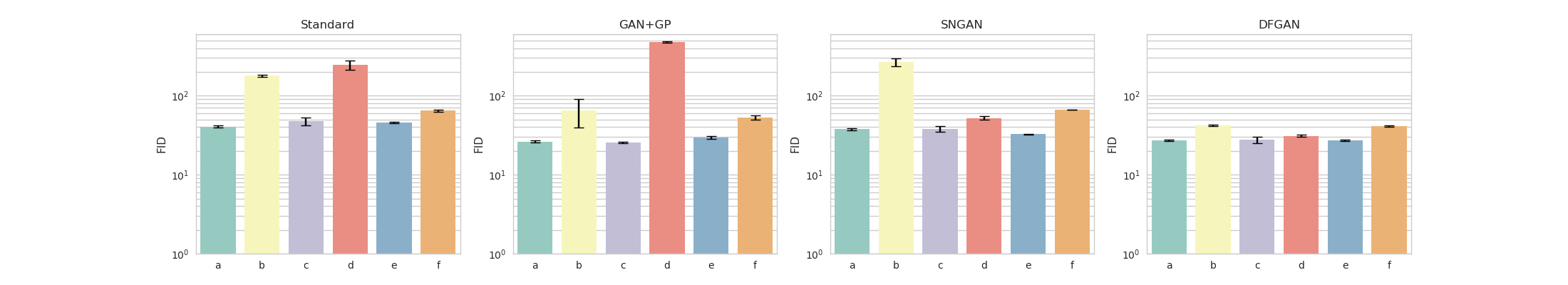}
    \includegraphics[width=\linewidth,trim=5.5cm 0 5.5cm 0,clip]{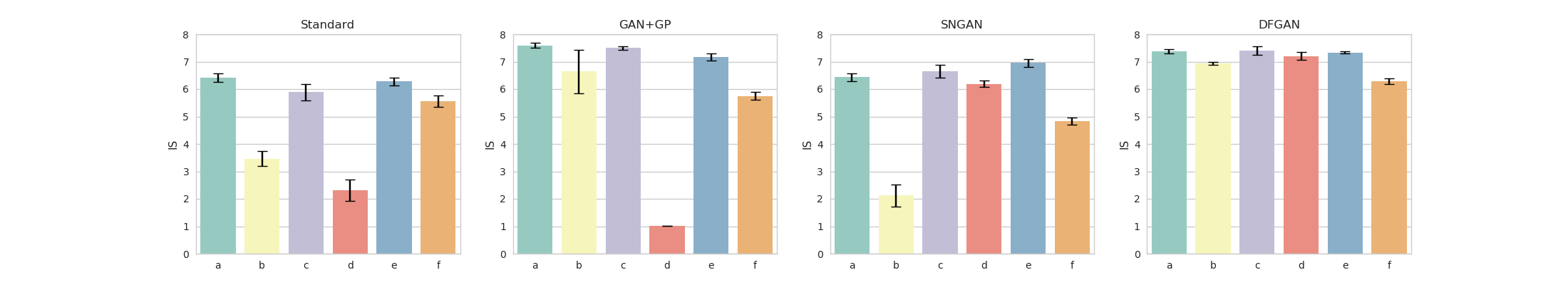}
    \caption{Results of the robustness experiments in Table \ref{tab:set} on CIFAR-10. We compare the standard GAN (\emph{1st column}), a GAN with gradient penalty (\emph{2nd column}), a GAN with spectral normalization (\emph{3rd column}) and a GAN with our proposed method (\emph{4th column}). Results are reported in Fr\'{e}chet Inception Distance \texttt{FID} (\emph{top}) and Inception Score \texttt{IS} (\emph{bottom}).}
  	\label{fig:exp}
\end{figure*}

\noindent\textbf{Robustness to Network Architectures.} To test the robustness of DFGANs against non-optimal network architectures we modified the networks in Table~\ref{tab:cifar_net} by doubling the number of layers in both generator and discriminator.  This leads to significantly worse performance in terms of FID in all cases: 46 to 135 (Standard), 33 to 111 (SNGAN), 28 to 36 (GAN+GP), and 27 to 60 (DFGAN). However, SNGAN+DF leads to good results with a FID of 27.6.


\begin{table}[t]
\centering
\caption{Hyperparameter settings used to evaluate the robustness of our proposed GAN training method. We vary the learning rate $\alpha$, the normalization in $G$, the optimizer, the activation functions, the number of discriminator iterations $n_{disc}$ and the number of training examples $n_{train}$. }
\label{tab:set}
\resizebox{\linewidth}{!}{\begin{tabular}{@{}lcccccc@{}}
\toprule
Exp. & \textbf{LR $\alpha$} & \textbf{BN in $G$} & \textbf{Opt.} & \textbf{ActFn}   &    \textbf{$n_{disc}$}   &    $n_{train}$                          \\ \midrule
a)  & $2\cdot 10^{-4}$  & FALSE    & ADAM     & (l)ReLU 	&	1	&	50K	\\
b)  & $2\cdot 10^{-4}$  & TRUE    & ADAM     & tanh		&	1	& 50K	\\
c)  & $1\cdot 10^{-3}$  & TRUE    & ADAM     & (l)ReLU 	&	1 	&50K	\\
d)  & $1\cdot 10^{-2}$  & TRUE    & SGD     & (l)ReLU 	&	1 	&50K	\\
e)  & $2\cdot 10^{-4}$  & TRUE    & ADAM     & (l)ReLU 	&	5 	&50K	\\
f)  & $2\cdot 10^{-4}$  & TRUE    & ADAM     & (l)ReLU 	&	1	&	5K	\\
\bottomrule
\end{tabular}}
\end{table}

\subsection{Qualitative Results} We trained DFGANs on $128\times128$ images from two large-scale datasets: ImageNet \cite{russakovsky2015imagenet} and LSUN bedrooms \cite{Yu2015LSUNCO}. The network architecture is similar to the one in Table~\ref{tab:cifar_net} with one additional layer in both networks. We trained the models for 100K iterations on LSUN and 300K iterations on ImageNet. Random samples of the models are shown in Fig.~\ref{fig:lsun_imnet}.
In Fig.~\ref{fig:noise} we show some examples of the noise that is produced by the noise generator at different stages during training. These examples resemble the image patterns that typically appear when the generator diverges.

\begin{figure}[t]
    \centering
    \begin{subfigure}[t]{\linewidth}
        \centering
		\adjincludegraphics[width=0.09\linewidth, trim={0 {.75\height} 0 0},clip]{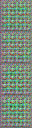}
		\adjincludegraphics[width=0.09\linewidth, trim={0 {.75\height} 0 0},clip]{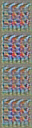}
		\adjincludegraphics[width=0.09\linewidth, trim={0 {.75\height} 0 0},clip]{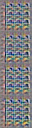}
		\adjincludegraphics[width=0.09\linewidth, trim={0 {.75\height} 0 0},clip]{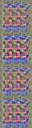}
		\adjincludegraphics[width=0.09\linewidth, trim={0 {.75\height} 0 0},clip]{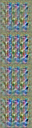}
		\adjincludegraphics[width=0.09\linewidth, trim={0 {.75\height} 0 0},clip]{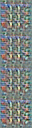}
		\adjincludegraphics[width=0.09\linewidth, trim={0 {.75\height} 0 0},clip]{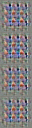}
		\adjincludegraphics[width=0.09\linewidth, trim={0 {.75\height} 0 0},clip]{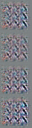}
		\adjincludegraphics[width=0.09\linewidth, trim={0 {.75\height} 0 0},clip]{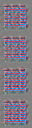}
		\adjincludegraphics[width=0.09\linewidth, trim={0 {.75\height} 0 0},clip]{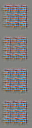}
    \end{subfigure}
    \begin{subfigure}[t]{\linewidth}
        \centering
		\adjincludegraphics[width=0.09\linewidth, trim={0 0 {.875\width} 0},clip]{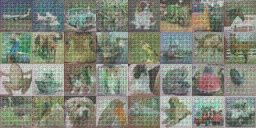}
		\adjincludegraphics[width=0.09\linewidth, trim={0 0 {.875\width} 0},clip]{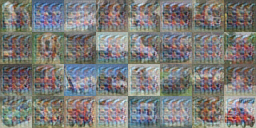}
		\adjincludegraphics[width=0.09\linewidth, trim={0 0 {.875\width} 0},clip]{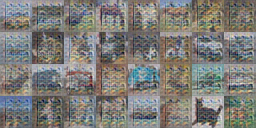}
		\adjincludegraphics[width=0.09\linewidth, trim={0 0 {.875\width} 0},clip]{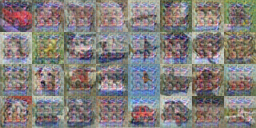}
		\adjincludegraphics[width=0.09\linewidth, trim={0 0 {.875\width} 0},clip]{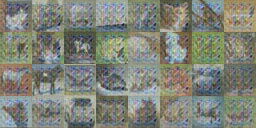}
		\adjincludegraphics[width=0.09\linewidth, trim={0 0 {.875\width} 0},clip]{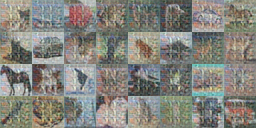}
		\adjincludegraphics[width=0.09\linewidth, trim={0 0 {.875\width} 0},clip]{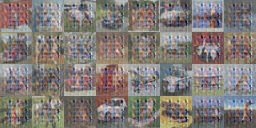}
		\adjincludegraphics[width=0.09\linewidth, trim={0 0 {.875\width} 0},clip]{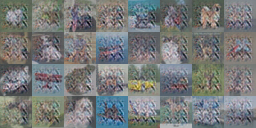}
		\adjincludegraphics[width=0.09\linewidth, trim={0 0 {.875\width} 0},clip]{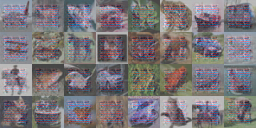}
		\adjincludegraphics[width=0.09\linewidth, trim={0 0 {.875\width} 0},clip]{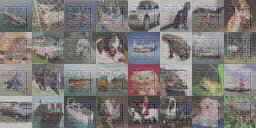}
    \end{subfigure}
    \caption{Examples of the generated noise (top row) and corresponding noisy training examples (rows 2 to 4). The columns correspond to different iterations. The noise varies over time to continually challenge the discriminator.}
  	\label{fig:noise}
\end{figure}

\section{Conclusions}

We have introduced a novel method to stabilize generative adversarial training that results in accurate generative models. Our method is rather general and can be applied to other GAN formulations with an average improvement in generated sample quality and variety, and training stability. Since GAN training aims at matching probability density distributions, we add random samples to both generated and real data to extend the support of the densities and thus facilitate their matching through gradient descent. 
We demonstrate the proposed training method on several common datasets of real images. \\

\noindent\textbf{Acknowledgements.} This work was supported by the Swiss National Science Foundation (SNSF) grant number 200021\_169622.
We also wish to thank Abdelhak Lemkhenter for discussions and for help with the proof of Theorem~1.

{\small
\bibliographystyle{ieee_fullname}
\bibliography{refs}
}

%

\newpage

\begin{center}
  \textbf{\large Supplementary Material for: \\ On Stabilizing Generative Adversarial Training with Noise}\\
\end{center}

\setcounter{equation}{0}
\setcounter{figure}{0}
\setcounter{table}{0}
\setcounter{page}{1}
\setcounter{section}{0}




\section{Influence on the Generator Gradient Norm }

We compare the norm of the generator gradient with and without DF for a GAN trained with the original minimax objective and a GAN trained with the alternative generator objective $\max_G \log(D(z))$  in Figure \ref{fig:gnorm}. The models were trained on CIFAR-10. We can observe the vanishing gradient phenomenon in Figure \ref{fig:gnorm1} when no distribution filtering is applied. With our proposed method the gradient norms are stable. In the case of the alternative loss in Figure \ref{fig:gnorm2} we can observe that the gradient norms are orders of magnitude higher when no distribution filtering is applied. This results in highly unstable weight updates due to the overconfident discriminator. 

\begin{figure}[h!]
    \centering
    \begin{subfigure}[t]{\linewidth}
        \centering
        \includegraphics[width=.9\linewidth ]{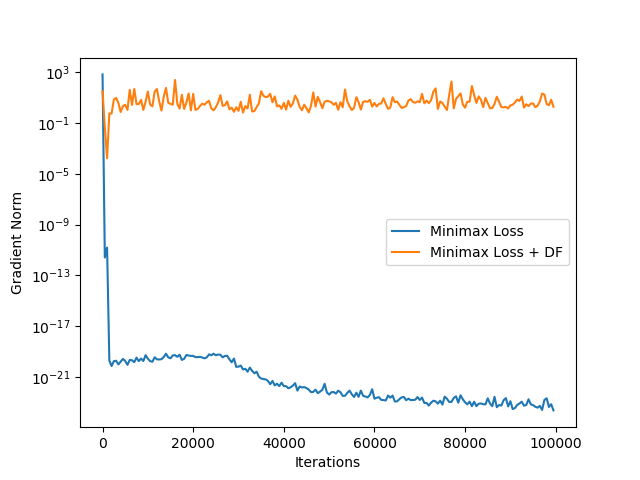}
        \caption{GAN with minimax loss}\label{fig:gnorm1}
    \end{subfigure}
    \begin{subfigure}[t]{\linewidth}
        \centering
        \includegraphics[width=.9\linewidth ]{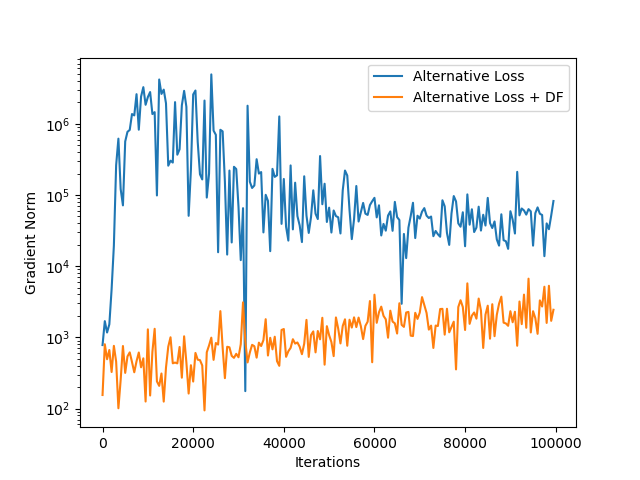}
        \caption{GAN with alternative loss}\label{fig:gnorm2}
    \end{subfigure}
    \caption{We show the norm of the generator gradient over the course of training for a GAN using the original minimax objective in (a) and a GAN using the alternative objective $\max_G \log(D(z))$ in (b). }
  	\label{fig:gnorm}
\end{figure}

\section{Experiments on synthetic data}

We performed experiments with a standard GAN and a DFGAN using Gaussian noise on synthetic 2-D data. The generator and discriminator architectures are both MLPs consisting of three fully-connected layers with a hidden-layer size of 512. We use ReLU activations and batch-normalization (\cite{ioffe2015batch}) in all but the first discriminator layer and the output layers. The Adam optimzer (\cite{kingma2014adam}) was used with a learning rate of $10^{-4}$ and we trained for 20K iterations. The results are shown in Figure \ref{fig:synth}. We can observe how the matching of both clean and filtered distribution leads to a better fit in the case of DFGAN.

\section{Implementation Details}

\noindent\textbf{Noise Generator.} The noise-generator architecture in all our experiments is equivalent to the generator architecture with the number of filters reduced by a factor of eight. The output of the noise-generator has a \texttt{tanh} activation scaled by a factor of two to allow more noise if necessary. We also experimented with a linear activation but didn't find a significant difference in performance.  

\noindent\textbf{GAN+GP.} For the comparisons to the GAN regularizer proposed by \cite{roth2017stabilizing} we used the same settings as used in their work in experiments with DCGAN. 

\noindent\textbf{SNGAN+DF.} We used the standard GAN loss (same as DCGAN) in all our experiments with models using spectral normalization. When combining SNGAN with DF we batch-normalized the noisy inputs to the discriminator.  

\section{Qualitative Results for Experiments}

We provide qualitative results for some of the ablation experiments in Figure \ref{fig:abl} and for the robustness experiments in Figure \ref{fig:exps}. As we can see in Figure \ref{fig:exps}, none of the tested settings led to degenerate solutions in the case of DFGAN while the other methods would show failure cases in some settings.  

\section{Application to Progressive GAN}

To test our method on a state-of-the-art GAN we applied our training method to the progressive GAN model. We used the DCGAN loss, trained for a total of 6M images and did not use label conditioning. We used fixed Gaussian noise for the distribution filtering. On CIFAR-10 progressiveGAN without DF achieved a FID of 29.4. Adding DF improved the performance to 26.8. Note that the original WGAN-GP loss in the same setup only achieved a FID of 29.8.

We also trained progressive-GAN+DF on higher resolution $256\times256$ images of LSUN bedrooms. See Figure \ref{fig:lsun} for results.

\begin{figure*}[h!]
    \centering
    \begin{subfigure}[t]{\textwidth}
        \centering
        \includegraphics[width=.48\linewidth ]{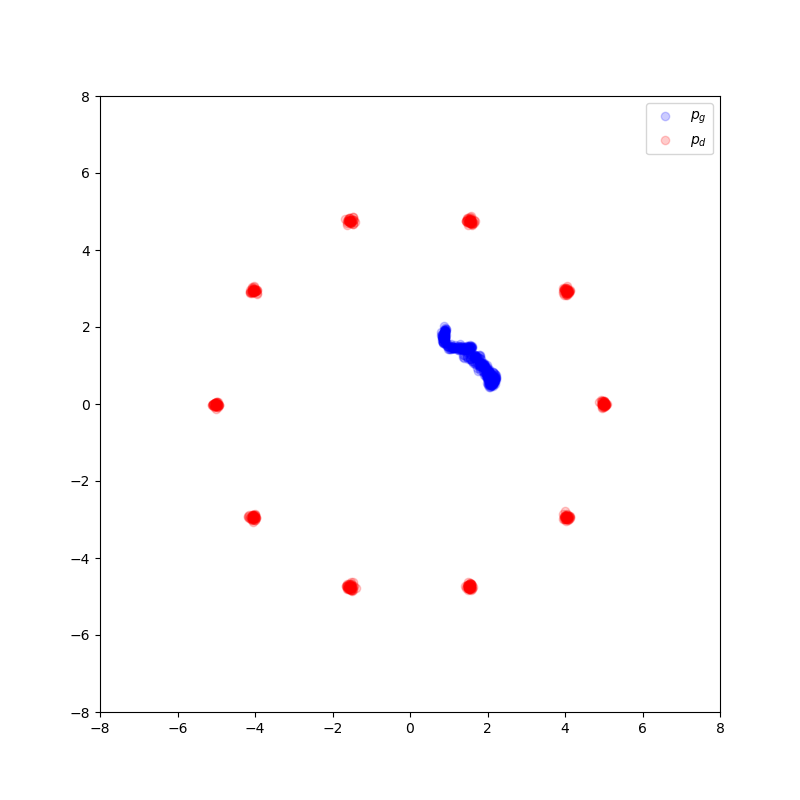}
        \includegraphics[width=.48\linewidth]{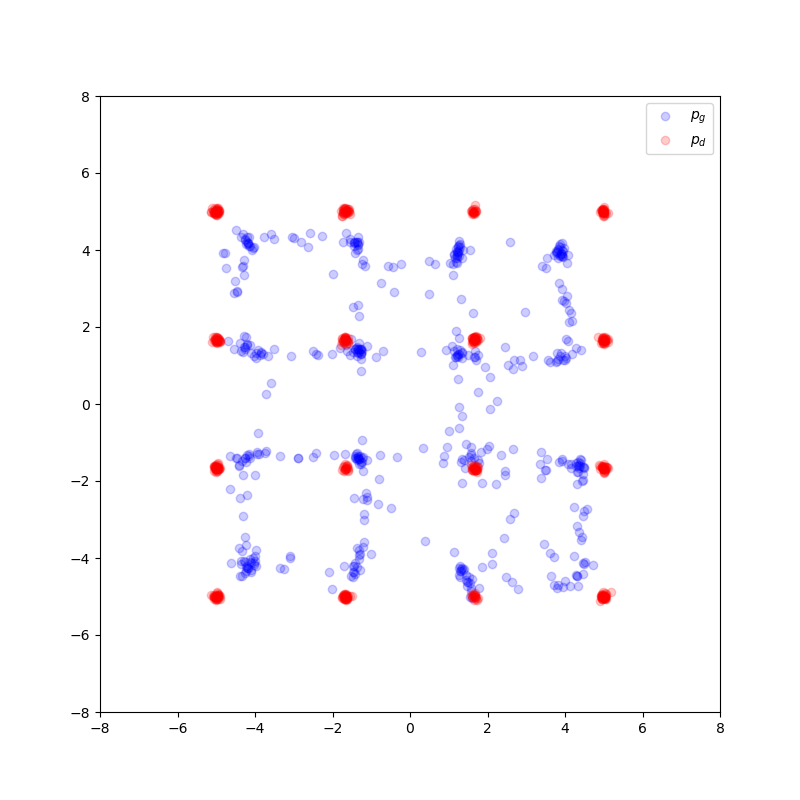}
        \caption{Standard GAN}
    \end{subfigure}
    \begin{subfigure}[t]{\textwidth}
        \centering
        \includegraphics[width=.48\linewidth ]{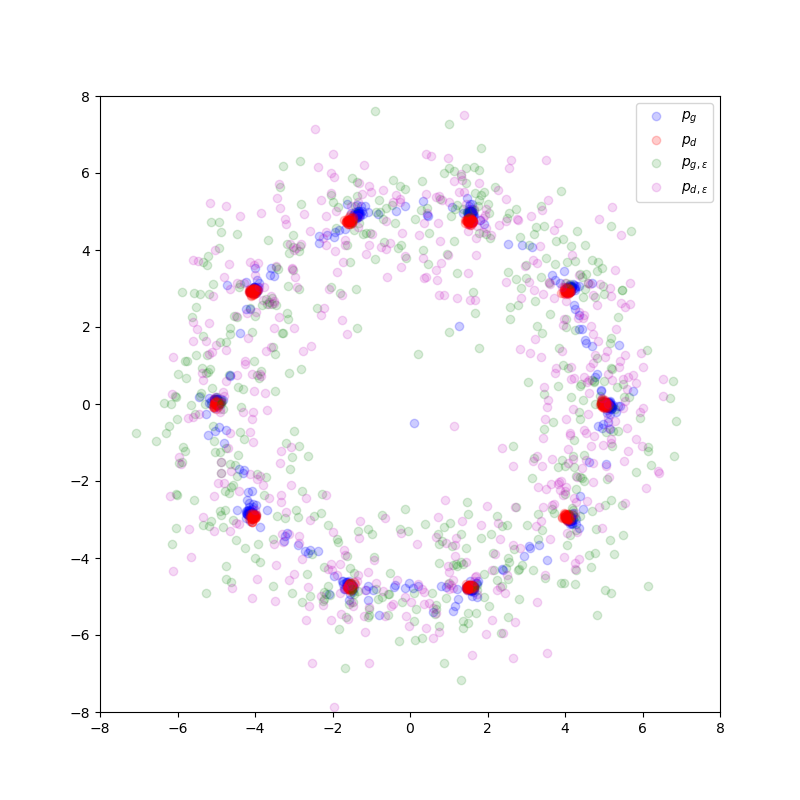}
        \includegraphics[width=.48\linewidth]{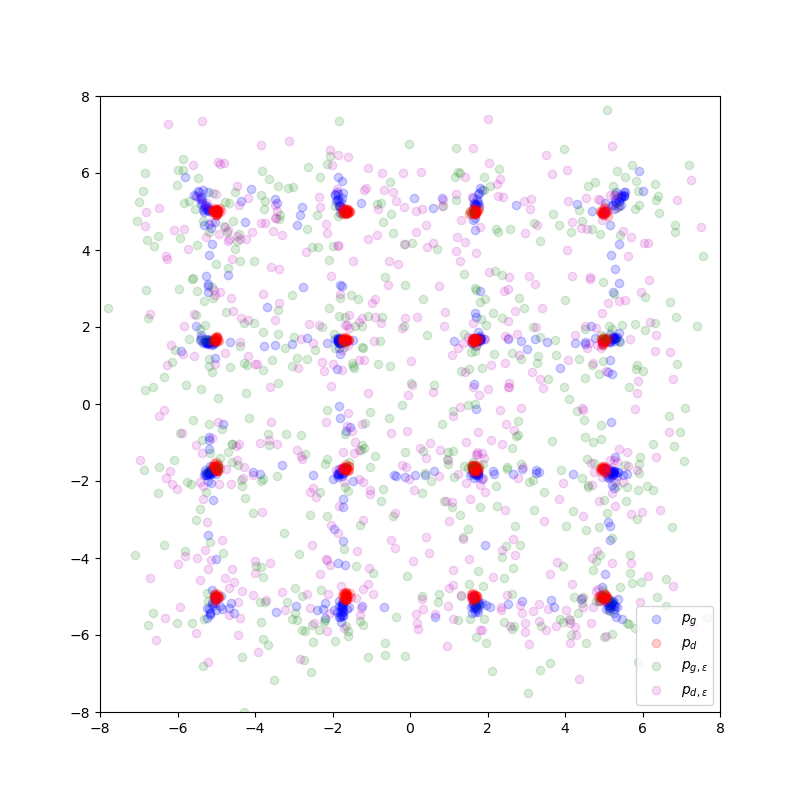}
        \caption{DFGAN with Gaussian noise}
    \end{subfigure}
    \caption{We performed experiments on synthetic 2D data with a standard GAN (\emph{top}) and a DFGAN (\emph{bottom}). The ground truth data is shown in \emph{red} and the model generated data is shown in \emph{blue}. For DFGAN we also show samples from the blurred data distribution $p_{d,\epsilon}$ in \emph{green} and the blurred model distribution $p_{g,\epsilon}$ in \emph{purple}. }
  	\label{fig:synth}
\end{figure*}

\begin{figure*}[h!]
    \centering
    \begin{subfigure}[t]{\textwidth}
        \centering
        \adjincludegraphics[width=.48\linewidth,trim={0 0 {.375\width} 0},clip]{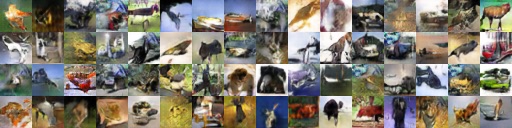}
        \adjincludegraphics[width=.48\linewidth,trim={0 0 {.375\width} 0},clip]{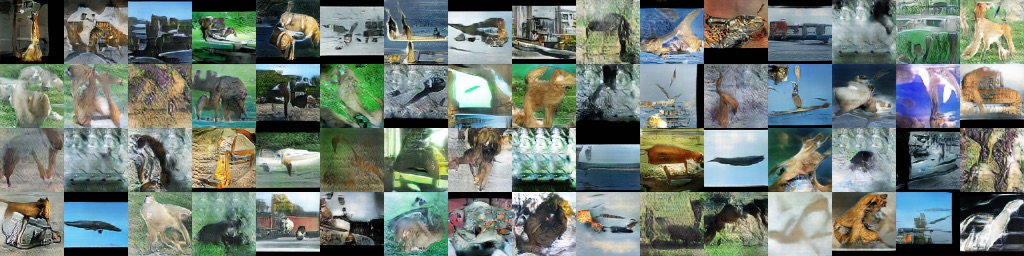}
        \caption{Standard GAN}
    \end{subfigure}
    \begin{subfigure}[t]{\textwidth}
        \centering
        \adjincludegraphics[width=.48\linewidth,trim={0 0 {.375\width} 0},clip]{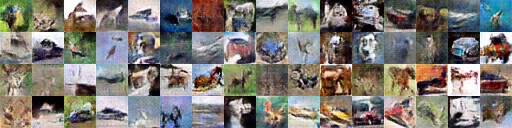}
        \adjincludegraphics[width=.48\linewidth,trim={0 0 {.375\width} 0},clip]{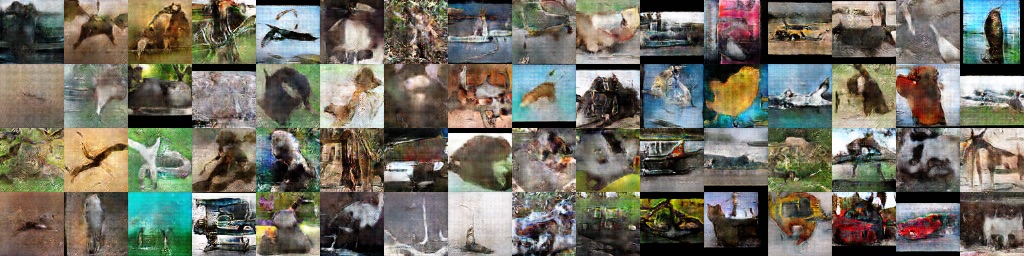}
        \caption{Noise only: $\epsilon \sim \mathcal{N}(0, I)$}
    \end{subfigure}
    \begin{subfigure}[t]{\textwidth}
        \centering
        \adjincludegraphics[width=.48\linewidth,trim={0 0 {.375\width} 0},clip]{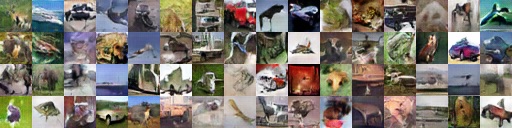}
        \adjincludegraphics[width=.48\linewidth,trim={0 0 {.375\width} 0},clip]{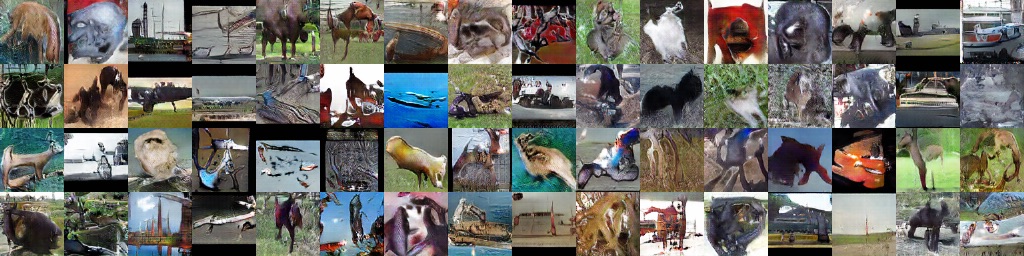}
        \caption{Noise only: $\epsilon \sim \mathcal{N}(0,\sigma I)$, $\sigma \rightarrow 0$}
    \end{subfigure}
    \begin{subfigure}[t]{\textwidth}
        \centering
        \adjincludegraphics[width=.48\linewidth,trim={0 0 {.375\width} 0},clip]{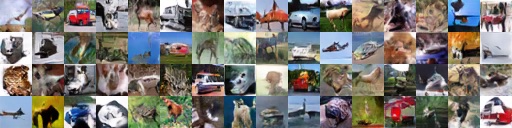}
        \adjincludegraphics[width=.48\linewidth,trim={0 0 {.375\width} 0},clip]{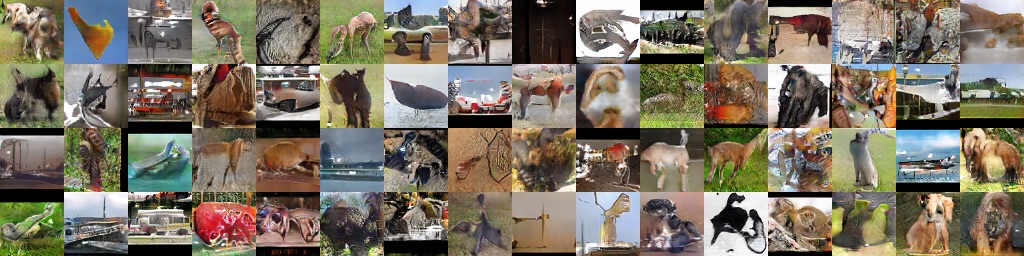}
        \caption{Clean + noise: $\epsilon \sim \mathcal{N}(0,I)$ (CIFAR-10)}
    \end{subfigure}
    \begin{subfigure}[t]{\textwidth}
        \centering
        \adjincludegraphics[width=.48\linewidth,trim={0 0 {.375\width} 0},clip]{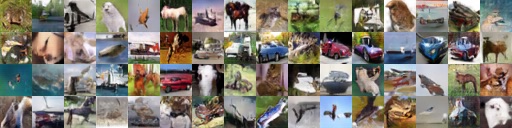}
        \adjincludegraphics[width=.48\linewidth,trim={0 0 {.375\width} 0},clip]{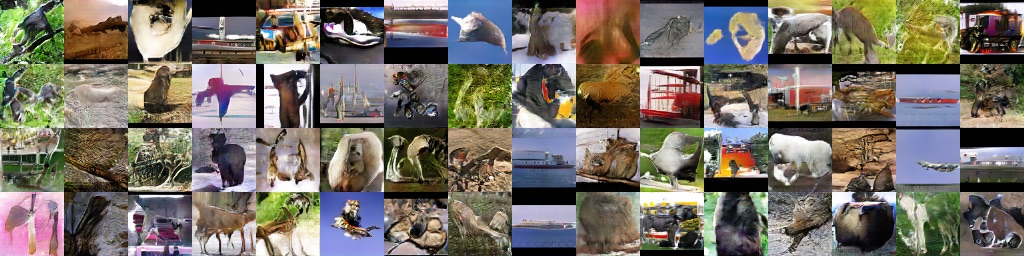}
        \caption{DFGAN $(\lambda=1 )$}
    \end{subfigure}
    \caption{We show random reconstructions for some of the ablation experiments listed in Table 2 of the paper. The left column shows results on CIFAR-10 and the right column shows results on STL-10.}
  	\label{fig:abl}
\end{figure*}

\begin{figure*}[h!]
    \centering
    \begin{subfigure}[t]{\textwidth}
        \centering
		\adjincludegraphics[width=4cm,trim={0 0 {.625\width} 0},clip]{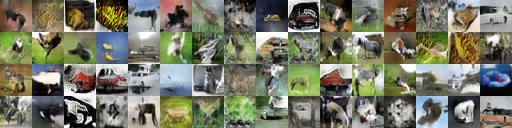}
		\adjincludegraphics[width=4cm,trim={0 0 {.625\width} 0},clip]{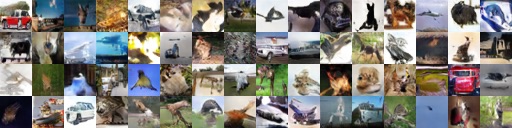}
		\adjincludegraphics[width=4cm,trim={0 0 {.625\width} 0},clip]{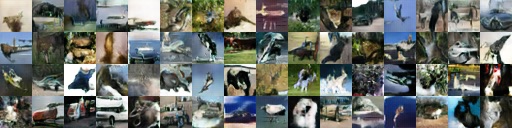}
		\adjincludegraphics[width=4cm,trim={0 0 {.625\width} 0},clip]{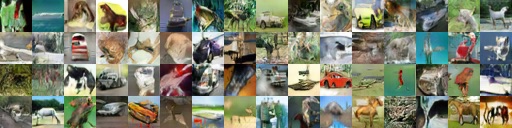}
	    \caption{}
    \end{subfigure}
        \begin{subfigure}[t]{\textwidth}
        \centering
		\adjincludegraphics[width=4cm,trim={0 0 {.625\width} 0},clip]{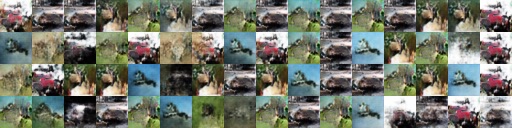}
		\adjincludegraphics[width=4cm,trim={0 0 {.625\width} 0},clip]{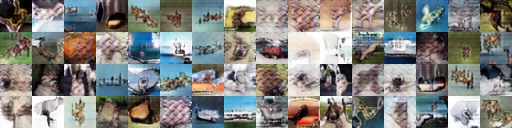}
		\adjincludegraphics[width=4cm,trim={0 0 {.625\width} 0},clip]{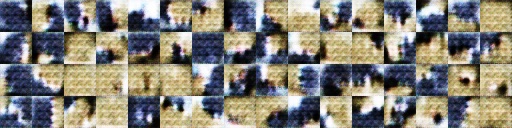}
		\adjincludegraphics[width=4cm,trim={0 0 {.625\width} 0},clip]{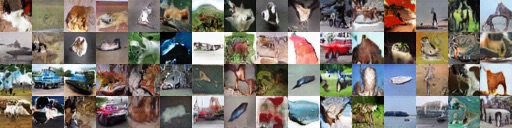}
	    \caption{}
    \end{subfigure}
        \begin{subfigure}[t]{\textwidth}
        \centering
		\adjincludegraphics[width=4cm,trim={0 0 {.625\width} 0},clip]{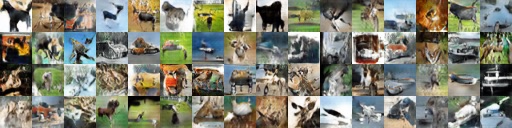}
		\adjincludegraphics[width=4cm,trim={0 0 {.625\width} 0},clip]{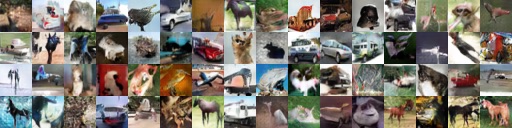}
		\adjincludegraphics[width=4cm,trim={0 0 {.625\width} 0},clip]{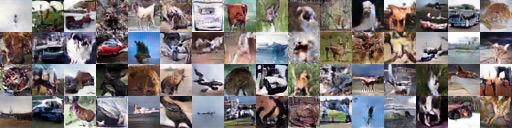}
		\adjincludegraphics[width=4cm,trim={0 0 {.625\width} 0},clip]{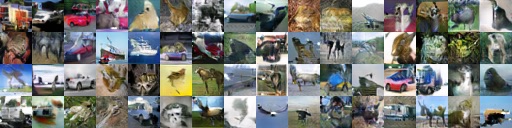}
	    \caption{}
    \end{subfigure}
        \begin{subfigure}[t]{\textwidth}
        \centering
		\adjincludegraphics[width=4cm,trim={0 0 {.625\width} 0},clip]{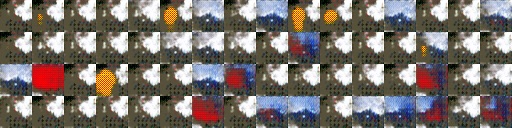}
		\adjincludegraphics[width=4cm,trim={0 0 {.625\width} 0},clip]{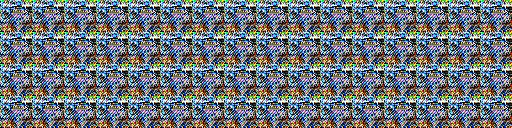}
		\adjincludegraphics[width=4cm,trim={0 0 {.625\width} 0},clip]{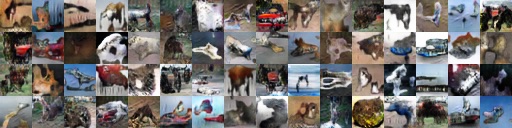}
		\adjincludegraphics[width=4cm,trim={0 0 {.625\width} 0},clip]{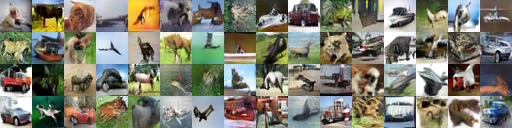}
	    \caption{}
    \end{subfigure}
        \begin{subfigure}[t]{\textwidth}
        \centering
		\adjincludegraphics[width=4cm,trim={0 0 {.625\width} 0},clip]{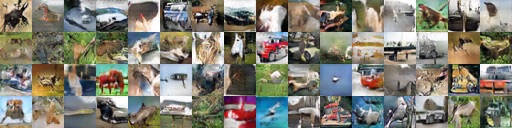}
		\adjincludegraphics[width=4cm,trim={0 0 {.625\width} 0},clip]{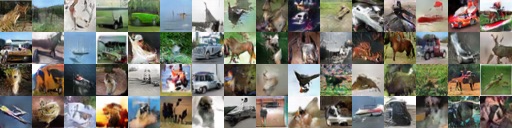}
		\adjincludegraphics[width=4cm,trim={0 0 {.625\width} 0},clip]{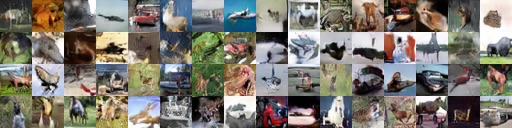}
		\adjincludegraphics[width=4cm,trim={0 0 {.625\width} 0},clip]{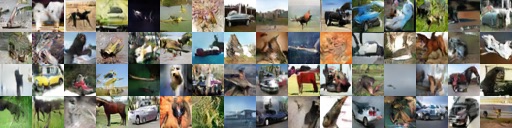}
	    \caption{}
    \end{subfigure}
        \begin{subfigure}[t]{\textwidth}
        \centering
		\adjincludegraphics[width=4cm,trim={0 0 {.625\width} 0},clip]{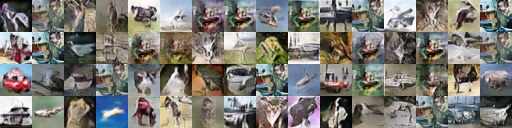}
		\adjincludegraphics[width=4cm,trim={0 0 {.625\width} 0},clip]{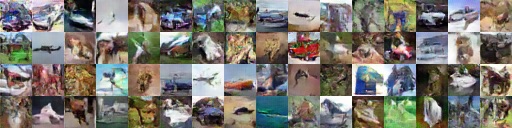}
		\adjincludegraphics[width=4cm,trim={0 0 {.625\width} 0},clip]{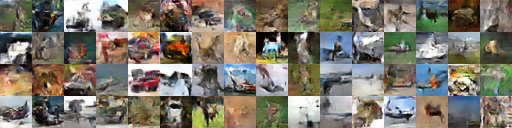}
		\adjincludegraphics[width=4cm,trim={0 0 {.625\width} 0},clip]{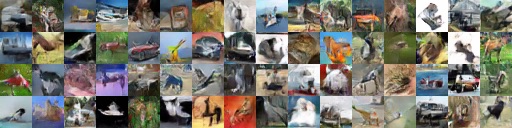}
	    \caption{}
    \end{subfigure}
    \caption{We show random reconstructions for the robustness experiments (see Table 4). We compare a standard GAN (\emph{1st column}), a GAN with gradient penalty by \cite{roth2017stabilizing} (\emph{2nd column}), a GAN with spectral normalization by \cite{miyato2018spectral} (\emph{3rd column}) and a GAN with our proposed method (\emph{4th column}). }
  	\label{fig:exps}
\end{figure*}

\begin{figure*}[h!]
    \centering
    \includegraphics[width=.98\linewidth ]{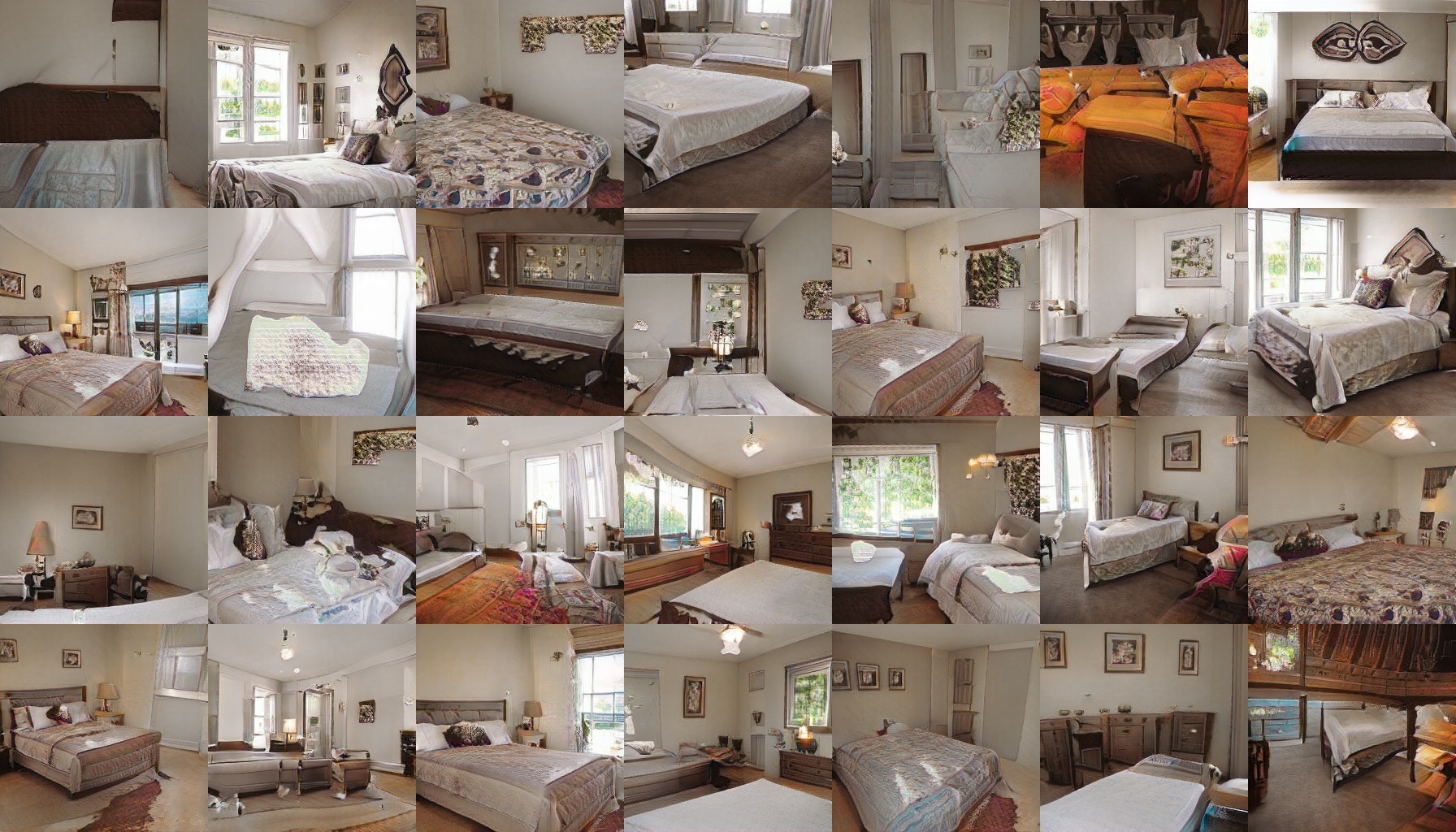}
    \caption{Results of progressive-GAN+DF trained on LSUN bedrooms. We used the DCGAN loss and Gaussian noise for DF in this experiment.}
  	\label{fig:lsun}
\end{figure*}

\end{document}